\pgfplotsset{compat=1.16}
\tikzstyle{state}+=[minimum size = 6mm, inner sep=0,outer sep=1]
\colorlet{disabled}{lightgray}
\tikzstyle{state}=[draw,rectangle,inner sep=5pt,rounded corners=2pt]
\tikzstyle{action}=[font=\small,inner sep=0pt,outer sep=3pt]
\tikzstyle{actionnode}=[circle,draw=black,fill=black,minimum size=1mm,inner sep=0,outer sep=0]
\tikzstyle{actionedge}=[draw,-]
\tikzstyle{prob}=[font=\scriptsize,inner sep=0pt,outer sep=1pt]
\tikzstyle{probedge}=[draw,->]
\tikzstyle{directedge}=[draw,->]
\tikzset{chainarrow/.tip={Stealth[length=3pt]}}
\tikzset{>=chainarrow}
\newtheorem{assumption}{Assumption}
\Crefname{equation}{Eq.}{Eqs.}
\Crefname{figure}{Fig.}{Figs.}
\Crefname{tabular}{Tab.}{Tabs.}
\Crefname{remark}{Rem.}{Rems.}
\Crefname{section}{Sec.}{Secs.}
\Crefname{subsection}{Sec.}{Secs.}
\Crefname{proposition}{Prop.}{Props.}
\Crefname{example}{Ex.}{Exs.}
\DeclarePairedDelimiter{\delimabs}{\lvert}{\rvert}
\DeclarePairedDelimiter{\delimcardinality}{\lvert}{\rvert}
\DeclarePairedDelimiter{\delimnorm}{\lVert}{\rVert}
\NewDocumentCommand{\abs}{sm}{\IfBooleanTF{#1}{\delimabs*{#2}}{\delimabs{#2}}}
\NewDocumentCommand{\cardinality}{sm}{\IfBooleanTF{#1}{\delimcardinality*{#2}}{\delimcardinality{#2}}}
\NewDocumentCommand{\norm}{sm}{\IfBooleanTF{#1}{\delimnorm*{#2}}{\delimnorm{#2}}}
\NewDocumentCommand{\powerset}{r()}{2^{#1}}
\newcommand{\setcomplement}[1]{\overline{#1}}
\newcommand{\eqdef}{\vcentcolon=}
\newcommand{\UnionSym}{\bigcup}
\newcommand{\IntersectionSym}{\bigcap}
\newcommand{\Union}{\UnionSym}
\newcommand{\Intersection}{\IntersectionSym}
\NewDocumentCommand{\Measures}{d()}{\IfValueTF{#1}{\Pi(#1)}{\Pi}}
\NewDocumentCommand{\Distributions}{d()}{\IfValueTF{#1}{\mathcal{D}(#1)}{\mathcal{D}}}
\NewDocumentCommand{\integral}{d<> m m}{\IfValueTF{#1}{\int_{#1} #2\,d#3}{\int #2\,d#3}}
\NewDocumentCommand{\Expectation}{s d[]}{\IfValueTF{#2}{\mathbb{E}}{\mathbb{E}\IfBooleanTF{#1}{\left[#2\right]}{[#2]}}}
\NewDocumentCommand{\Probability}{s d[]}{\mathop{\mathrm{Pr}}\IfValueT{#2}{\IfBooleanTF{#1}{\left[#2\right]}{[#2]}}}
\newcommand{\MDP}{\mathcal{M}}
\newcommand{\States}{S}
\newcommand{\initialstate}{{\hat{s}}}
\NewDocumentCommand{\mctransitions}{d()}{\IfValueTF{#1}{P(#1)}{P}}
\newcommand{\Actions}{A}
\NewDocumentCommand{\stateactions}{r()}{{\Actions}(#1)}
\NewDocumentCommand{\mdptransitions}{d()}{\IfNoValueTF{#1}{\mathsf{P}}{\mathsf{P}(#1)}}
\newcommand{\infinitepath}{\rho}
\newcommand{\finitepath}{\varrho}
\NewDocumentCommand{\Infinitepaths}{d<>}{\IfValueTF{#1}{\mathsf{Paths}_{#1}}{\mathsf{Paths}}}
\NewDocumentCommand{\Finitepaths}{d<>}{\IfValueTF{#1}{\mathsf{FPaths}_{#1}}{\mathsf{FPaths}}}
\newcommand{\strategy}{\pi}
\NewDocumentCommand{\Strategies}{d<>}{\IfNoValueTF{#1}{\Pi}{\Pi_{#1}}}
\NewDocumentCommand{\StrategiesMD}{d<>}{\IfNoValueTF{#1}{\Pi}{\Pi_{#1}}^{\mathsf{MD}}}
\newcommand{\val}{\mathsf{V}}
\DeclareMathOperator{\SccsOp}{SCC}\NewDocumentCommand{\Sccs}{r()}{\SccsOp(#1)}
\DeclareMathOperator{\BsccsOp}{BSCC}\NewDocumentCommand{\Bsccs}{r()}{\BsccsOp(#1)}
\DeclareMathOperator{\EcsOp}{EC}\NewDocumentCommand{\Ecs}{d()}{\IfNoValueTF{#1}{\EcsOp}{\EcsOp(#1)}}
\DeclareMathOperator{\MecsOp}{MEC}\NewDocumentCommand{\Mecs}{d()}{\IfNoValueTF{#1}{\MecsOp}{\MecsOp(#1)}}
\NewDocumentCommand{\ProbabilityMC}{s r<> d[]}{\mathsf{Pr}_{#2}\IfNoValueF{#3}{\IfBooleanTF{#1}{\!\left[#3\right]\!}{[#3]}}}
\NewDocumentCommand{\ProbabilityMDP}{s r<> r<> d[]}{\mathsf{Pr}_{#2}^{#3}\IfNoValueF{#4}{\IfBooleanTF{#1}{\!\left[#4\right]\!}{[#4]}}}
\NewDocumentCommand{\ProbabilityMDPmax}{s r<> d[]}{\mathsf{Pr}_{#2}^{\max}\IfNoValueF{#3}{\IfBooleanTF{#1}{\!\left[#3\right]\!}{[#3]}}}
\NewDocumentCommand{\ProbabilityMDPsup}{s r<> d[]}{\mathsf{Pr}_{#2}^{\sup}\IfNoValueF{#3}{\IfBooleanTF{#1}{\!\left[#3\right]\!}{[#3]}}}
\NewDocumentCommand{\ExpectationMC}{s r<> r[]}{\mathbb{E}_{#2}\IfBooleanTF{#1}{\!\left[#3\right]\!}{[#3]}}
\newcommand{\Reachset}{T}
\NewDocumentCommand{\stepreach}{r<>}{\Diamond^{=#1}}
\NewDocumentCommand{\boundedreach}{r<>}{\Diamond^{{\leq}#1}}
\newcommand{\reach}{\Diamond}
\NewDocumentCommand{\steadystate}{d<> d()}{\IfValueTF{#1}{\pi^\infty_{#1}}{\pi^\infty}\IfValueT{#2}{(#2)}}
\newcommand{\stateSeq}{\mathcal{S}}
\newcommand{\smallsupport}{\texttt{Small Support}\xspace}
\newcommand{\independence}{\texttt{Independence}\xspace}
\newcommand{\equivalencestructures}{\texttt{Equivalence Structures}\xspace}
\newcommand{\fragments}{\texttt{Fragments}\xspace}
\newcommand{\chainfragments}{\texttt{Chain Fragments}\xspace}
\title{What Are the Odds? Improving Statistical Model Checking of Markov Decision Processes}
\titlerunning{What Are the Odds? Improving the Foundations of SMC}
\author{Tobias Meggendorfer\inst{1}\orcidlink{0000-0002-1712-2165}
	\and
	Maximilian Weininger\inst{2,3}\orcidlink{0000-0002-1825-0097}
	\and
	Patrick Wienh\"oft\inst{4,5}\orcidlink{0000-0001-8047-4094}
}
\authorrunning{T.~Meggendorfer, M.~Weininger, P.~Wienh\"oft}
\institute{
	Lancaster University Leipzig, Leipzig, Germany
	\and
	Ruhr University Bochum, Bochum, Germany
	\and
	Institute of Science and Technology Austria, Klosterneuburg, Austria
	\and
	Dresden University of Technology, Dresden, Germany
	\and
	Centre for Tactile Internet with Human-in-the-Loop (CeTI), Dresden, Germany
	\email{tobias@meggendorfer.de}, \email{maximilian.weininger@rub.de}, \email{patrick.wienhoeft@tu-dresden.de}\vspace{-1em}}
\crefname{section}{Sec.}{Secs.}
\crefname{appendix}{App.}{Apps.}
\crefname{lemma}{Lem.}{Lemms.}
\crefname{theorem}{Thm.}{Thms.}
\crefname{corollary}{Cor.}{Cors.}
\crefname{equation}{Eq.}{Eqs.}
\crefname{figure}{Fig.}{Figs.}
\crefname{table}{Tab.}{Tabs.}
\crefname{assumption}{Asm.}{Asms.}
\Crefname{section}{Sec.}{Secs.}
\Crefname{appendix}{App.}{Apps.}
\Crefname{lemma}{Lem.}{Lemms.}
\Crefname{theorem}{Thm.}{Thms.}
\Crefname{corollary}{Cor.}{Cors.}
\Crefname{equation}{Eq.}{Eqs.}
\Crefname{figure}{Fig.}{Figs.}
\Crefname{table}{Tab.}{Tabs.}
\Crefname{assumption}{Asm.}{Asms.}
\colorlet{linecolor}{gray}
\newtcolorbox{linebox}[1]{
	empty,
	left skip=1mm,
	attach boxed title to top left,
	minipage boxed title,
	title=#1,
	boxed title style={empty,size=minimal,toprule=0pt,top=1mm,left=2mm,bottom=1mm,overlay={}},
	coltitle=black,fonttitle=\bfseries\scshape,
	before=\par\medskip\noindent,parbox=false,boxsep=0pt,left=2mm,right=0mm,top=2pt,breakable,pad at break=0mm,
	before upper=\csname @totalleftmargin\endcsname0pt,
	overlay unbroken={\draw[linecolor,line width=2pt] ([xshift=-0pt]title.north west) -- ([xshift=-0pt]frame.south west); },
	overlay first={\draw[linecolor,line width=2pt] ([xshift=-0pt]title.north west) -- ([xshift=-0pt]frame.south west); },
	overlay middle={\draw[linecolor,line width=2pt] ([xshift=-0pt]frame.north west) -- ([xshift=-0pt]frame.south west); },
	overlay last={\draw[linecolor,line width=2pt] ([xshift=-0pt]frame.north west) -- ([xshift=-0pt]frame.south west); },
}
\begin{document}

\maketitle
\begin{abstract}
	Markov decision processes (MDPs) are a fundamental model of decision making which exhibit non-deterministic choice as well as probabilistic uncertainty.
	Traditionally, verification assumes exact knowledge of the probabilities that govern the behaviour of an MDP.
	However, this assumption often is unrealistic, e.g.\ when modelling cyber-physical systems or biological processes.
	There, we can employ statistical model checking (SMC) to obtain an estimate of the MDP's value (e.g.\ the maximal probability of reaching a goal state) that is close to the true value with high confidence (probably approximately correct).
	Model-based SMC algorithms sample the MDP and build a model of it by estimating all transition probabilities, essentially for every transition answering the question: \enquote{What are the odds?}
	However, so far the statistical methods employed by state-of-the-art SMC verification algorithms are quite naive or even compromise the correctness guarantees.

	Our first contribution is to categorize and analyse statistical methods, identifying those that are most efficient and that provide suitable guarantees for the verification setting.
	As our second contribution, we propose improvements that exploit structural knowledge of the MDP.
	Both contributions generalize to many types of problem statements as they are largely independent of the setting.
	Moreover, our experimental evaluation shows that they lead to significant gains, reducing the number of samples that an SMC algorithm has to collect by up to two orders of magnitude.

\keywords{Probabilistic verification \and Statistical model checking \and Markov decision processes \and Confidence intervals}

\end{abstract}

\section{Introduction}\label{sec:intro}
\emph{Markov decision processes} (MDPs) \cite{Puterman} are \emph{the} classic modelling formalism for dynamic systems with probabilistic and nondeterministic behaviour.
In essence, MDPs comprise several states and each state has an associated set of available actions to choose from.
In order to capture the \emph{aleatoric uncertainty} (the randomness of the process, e.g.\ a coin toss), each action corresponds to a distribution on the successor states rather than a single successor.
The system evolves from a state by choosing an action, moving to a successor sampled from the corresponding distribution, and repeating this process ad infinitum.

\emph{Infinite-horizon reachability} asks: What is the \emph{optimal} probability to eventually reach a given set of target states?
Solving an MDP means to correctly computing this \emph{value} where the optimum is taken over the set of all \emph{strategies}, i.e.\ ways to choose actions in every state.

\paragraph{Restricted knowledge.}
Traditionally, verification procedures assume full knowledge of the MDP.
However, this is often unrealistic in practice, as for example probabilities governing biological processes or cyber-physical systems are usually not known precisely, see also~\cite[Chp.~28.7.3]{handbook} or~\cite{DBLP:journals/sttt/BadingsSSJ23}.
MDPs with unknown transition distributions exhibit \emph{epistemic uncertainty}~\cite{DBLP:journals/sttt/BadingsSSJ23} (the lack of knowledge, e.g.\ unknown bias of a coin).
Clearly, in this case we cannot give exact results, as we do not know the exact system we are dealing with.
Nevertheless, the desire to obtain guarantees on the correctness of the result remains unchanged.

\paragraph{PAC guarantees.}
With unknown transition probabilities, the goal of verification usually is to guarantee that the result is \emph{probably approximately correct} (PAC):
Given a confidence budget $\delta$ and a precision $\varepsilon$, we want a result that is \emph{probably} (with probability greater than $1-\delta$) \emph{approximately correct} (at least $\varepsilon$-precise).
For example, consider a coin with unknown bias.
If we sample it 1000 times and see heads 800 times, its bias is likely around 0.8.
More formally, for a required confidence of $95\%$ (i.e.\ $\delta=0.05$), using a statistical method such as Hoeffding's inequality~\cite{Hoe63} (see \Cref{sec:3-coin}) yields that the coin's bias is likely within $0.8 \pm 0.04$.

\paragraph{Statistical model checking: Markov chains.}
Markov chains effectively are MDPs where every state has exactly one available action.
As such, the outcome is purely stochastic (without any non-determinism) and these systems describe a single random variable.
Here, obtaining PAC guarantees is straightforward.
Intuitively, we gather simulations of the Markov chain and remember whether they reach a target state or not.
Then, we can estimate this probability of reaching the target analogous to the above coin example.
We refer to extensive surveys of statistical model checking for Markov chains with properties expressed in linear temporal or computational tree logic~\cite{DBLP:conf/isola/Kretinsky16,DBLP:journals/tomacs/AghaP18,DBLP:series/lncs/LegayLTYSG19}, or reward properties~\cite{WatS}.
Notably, in the context of Markov chains, statistical approaches may even be preferred over traditional methods to combat state space explosion, as sampling may still be feasible even when the model is too large to be analysed precisely.
However, for MDPs, state-of-the-art statistical approaches are far less efficient than traditional methods; and the main interest is to obtain PAC guarantees in the presence of epistemic uncertainty, where traditional methods are simply not applicable.

\paragraph{Statistical model checking: MDPs.}
Solving MDPs with unknown transition probabilities is fundamentally more difficult than solving purely stochastic Markov chains.
This is because the satisfaction of the property depends on the \emph{strategy}; already for reachability, a naive approach would have to check exponentially many of these.
One way of tackling this problem~\cite{atva14,DBLP:conf/rss/FuT14,DBLP:journals/tac/WenT22,THEORETICS} are \emph{model-free} approaches (see~\cite[Rem.~5.1]{THEORETICS} for details).
These algorithms require an astronomical number of samples to provide non-trivial bounds~\cite{atva14,DBLP:journals/tac/WenT22,THEORETICS} (see~\cite[Sec.~4]{AKW19}) or knowledge of the mixing time of the MDP~\cite{DBLP:conf/rss/FuT14}, which is as hard to compute as the value.
Thus, state-of-the-art statistical verification of MDPs is \emph{model-based}.
Essentially, it proceeds by first learning the unknown transition probabilities in the MDP, thereby constructing an MDP with full knowledge that is probably correct, and then solving this using traditional methods.
This way, PAC-guarantees can be obtained efficiently in many settings, including MDPs that are com\-mu\-ni\-cating~\cite{AueJakOrt08}, continuous-time~\cite{agarwal2022pac}, or  continuous-space~\cite{DBLP:journals/jair/BadingsRAPPSJ23}, in stochastic games~\cite{AKW19}, and for properties that are $\omega$-regular \cite{DBLP:journals/corr/abs-2310-12248} or consist of multiple objectives~\cite{WeiningerGMK21}.

\paragraph{Sample efficiency.}
To gather samples, we need to simulate or interact with the system, which may require significant effort (e.g.\ for cyber-physical systems).
Naturally, we thus are interested in obtaining PAC bounds efficiently, i.e.\ with as few samples as possible.
For Markov chains, we refer again to~\cite{DBLP:conf/isola/Kretinsky16,DBLP:journals/tomacs/AghaP18,DBLP:series/lncs/LegayLTYSG19,WatS}, and in particular highlight the efforts for dealing with rare events~\cite{DBLP:reference/npe/Villen-AltamiranoV11,DBLP:conf/setta/BuddeDH17}.
For MDPs, many works focus on simpler cases of finite-horizon or discounted properties, bounding the sample complexity in terms of the horizon or discount factor, e.g.~\cite{LP12,DBLP:conf/colt/AgarwalKY20,DBLP:conf/icml/JinKSY20,DBLP:conf/aaai/HasanzadeZonuzy21,DBLP:conf/icml/Shi0W0C22,DBLP:conf/colt/WagenmakerSJ22}.
Notably, early works contained subtle mistakes compromising guarantees, see \cite[Sec.~3.3]{DBLP:conf/isola/Kretinsky16}.
Our focus in this work is on \emph{infinite-horizon} objectives, focussing on reachability for simplicity, and detailing in \Cref{app:other-objectives} how our contributions extend to other objectives.

\paragraph{Our contribution} is practically improving the sample efficiency of model-based statistical verification for MDPs with unknown transition probabilities.
Colloquially speaking, when learning the transition probabilities, we provide a \emph{more precise} answer to the question: \enquote{What are the odds?}.
The methods we propose make the most of the data, i.e.\ given the same simulations of the system, they provide better probability estimates than the state-of-the-art and as such also transparently improve the sample efficiency of these approaches.
Concretely, our contribution is threefold:

\emph{Firstly}, we discuss estimation of categorical distributions, the central component of model-based approaches
(\Cref{sec:3-title}).
To this end, we survey relevant literature and contrast numerous statistical methods.
We identify most inequalities as being unsuitable due to insufficient guarantees, as well as several SMC approaches that claim correctness despite using them.
Moreover, we prove a statement of independent interest about the maximum size of the confidence interval produced by Hoeffding's inequality and the Clopper-Pearson interval (\cref{lemma:sample-complexity}), solving an open question from~\cite{DBLP:journals/tse/BuS24}.
We empirically show that the former produces intervals significantly greater than the latter (\cref{sec:3-sample-number}).
Thus, for the purpose of estimating transition probabilities, the Clopper-Pearson interval is always preferable.

\emph{Secondly}, we provide techniques for exploiting the knowledge we have about the MDP and the property of interest (\Cref{sec:4-title}).
They allow us to invest less or even no part of our confidence budget $\delta$ for certain state-action pairs.
Many of them are based on already known observations that however -- to our knowledge -- have not yet been applied in this context.
Moreover, we propose the novel technique of collapsing so-called \fragments.

\emph{Thirdly}, we empirically evaluate the impact of our improvements (\Cref{sec:5-title}).
We conclude that our methods always have a positive impact: They have practically no computational overhead and always reduce the number of samples necessary to achieve a given precision $\varepsilon$, in many cases by two orders of magnitude.

\paragraph{Relevance.}
Regarding \emph{efficiency}, a collection of statistical methods and structural improvements is overdue:
Most papers in the verification community overlook their potential, even those with the declared goal of improving the scalability and practical applicability of statistical approaches~\cite{AKW19,agarwal2022pac,DBLP:journals/jair/BadingsRAPPSJ23}.
In particular, the state-of-the-art method for estimating categorical distributions is Hoeffding's inequality~\cite{AKW19,WeiningerGMK21,agarwal2022pac,BaiDubWie23,DBLP:journals/corr/abs-2310-12248}.
As mentioned, we empirically show that this is significantly worse than the (well-known in statistics) Clopper-Pearson interval (\Cref{sec:3-sample-number}).
Further, we prove that the method for estimating distributions developed in~\cite{DBLP:journals/jair/BadingsRAPPSJ23} is in fact a weaker version of the Clopper-Pearson interval (\Cref{app:jansen}).
Moreover, even arguably trivial ways of exploiting the structure (\Cref{sec:4-title}) such as \smallsupport and \independence have not been employed, let alone the more advanced techniques of \equivalencestructures and \fragments.

With respect to \emph{soundness}, we show that several papers use methods that \emph{compromise the PAC-guarantees}~\cite{Arnd,BazilleGJS20,DBLP:conf/nips/SuilenS0022} (see \Cref{app:other-conf-intervals}).
Our survey clearly separates methods for estimating distributions that are unsuitable for our setting, and we provide proofs for the soundness of all structural improvements.

Finally, while our focus is on tackling MDPs, many of our findings are also applicable in purely probabilistic cases, such as DTMCs.
In particular, building on the preprint of our work \cite{arxiv}, the recent work \cite[Tab.~1]{WatS} shows that almost all tools that apply statistical methods in the context of DMTCs employ methods that are inefficient or unsound.

\section{Preliminaries}

\noindent
A \emph{probability distribution} over a countable set $X$ is a mapping $d : X \to [0,1]$, such that $\sum_{x \in X} d(x) = 1$.
The set of all probability distributions on $X$ is $\Distributions(X)$.

\subsection{Markov Decision Processes}

\noindent
A \emph{Markov decision process (MDP)}, e.g.\ \cite{Puterman}, is a tuple $\MDP = (\States, \Actions, \mdptransitions)$, where
$\States$ is a finite set of states;
$\Actions$ is a finite set of actions, overloaded to yield for each state $s \in \States$ a non-empty set of \emph{available actions} $\stateactions(s) \subseteq \Actions$;
and $\mdptransitions : \States \times \Actions \to \Distributions(\States)$ is the (partial) transition function, that yields for each state $s \in \States$ and $a \in \stateactions(s)$ the associated distribution over successor states $\mdptransitions(s, a)$.
For ease of notation, we write $\mdptransitions(s, a, s')$ instead of $\mdptransitions(s, a)(s')$.

The \emph{semantics} of MDPs is defined in the usual way by means of paths, strategies and the probability measure in the induced Markov chain.
We briefly recall this here and refer to~\cite[Chp.~10]{DBLP:books/daglib/0020348} for an extensive introduction.
Let $\MDP$ be an MDP.
An \emph{infinite path} is a sequence of state-action pairs $\infinitepath = s_1 a_1 s_2 a_2 \cdots \in (\States \times \Actions)^\omega$ with $\mdptransitions(s_i, a_i, s_{i+1}) > 0$.
We denote by $\infinitepath(i)$
the $i$-th state $s_i$ in a given path and by $\Infinitepaths<\MDP>$ the set of all infinite paths.
A (memoryless deterministic, MD) \emph{strategy} is a mapping $\strategy : \States \to \Actions$, choosing one enabled action in each state, i.e.\ $\strategy(s) \in \Actions(s)$.
We write $\StrategiesMD<\MDP>$ to refer to all MD strategies.
Complementing an MDP with such a strategy and an initial state $\initialstate \in \States$ yields a Markov chain that induces a unique probability measure $\ProbabilityMDP<\MDP, \initialstate><\strategy>$ over infinite paths \cite[Chp.~10.1]{DBLP:books/daglib/0020348}.

An \emph{objective} formalises the goal of the MDP.
For simplicity, we focus on \emph{reachability}, and in \Cref{app:other-objectives} explain how our methods extend to other objectives.
Define $\reach \Reachset \coloneqq \{\infinitepath \in \Infinitepaths<\MDP> \mid \exists i.~\infinitepath(i) \in \Reachset\}$ as the set of paths that eventually reach $\Reachset$.
The \emph{value} of a state in an MDP is the maximum probability to achieve the objective, i.e.\ reach the goal states, under any strategy.
Formally, the value of state $s$ is defined as $\val_{\MDP}(s) \eqdef \max_{\strategy \in \StrategiesMD<\MDP>} \ProbabilityMDP<\MDP, s><\strategy>[\reach \Reachset]$.
Note that MD strategies are sufficient to maximise the reachability probability~\cite[Lem.\ 10.102]{DBLP:books/daglib/0020348}.

\subsection{Statistical Guarantees and Statistical Model Checking}\label{sec:2-smc}
\noindent
In this work, we deal with MDPs where the transition function $\mdptransitions$ is unknown.
There are multiple approaches to tackle this problem, and we focus on \emph{model-based statistical model checking} (SMC) (see \Cref{sec:intro,sec:RW} for discussion of other SMC approaches).
Algorithms for model-based SMC, e.g.~\cite{AKW19,WeiningerGMK21,agarwal2022pac,DBLP:conf/nips/SuilenS0022,BaiDubWie23,DBLP:journals/jair/BadingsRAPPSJ23}, comprise three conceptual steps:
First, they obtain a finite number of samples from the MDP (see \cref{sec:RW} for a discussion of sampling strategies).
Then, from these samples they construct confidence intervals on each \emph{transition} probability.
Finally, they solve the induced interval MDP~\cite{givan2000bounded}, yielding bounds on the true value.

Our work focuses on the second part of model-based SMC:
Given a finite number of samples, what is the best way to estimate transition probabilities?
Colloquially, we \enquote{make the most of the data} by obtaining \enquote{as small as possible} intervals.\footnote{We intentionally do not ask for the \emph{minimum} interval size, because it is a random variable (as it depends on the sampling outcome) and minimising its expected value would require assuming a prior distribution over $P$, which we cannot justify.}
By getting more precise estimates from the samples, we reduce the width of the confidence intervals on transition probabilities in the second phase of the SMC algorithms, and thus also improve their overall performance.
We additionally assume knowledge of the topology, i.e.\ that we know the support $\{s' \mid \mdptransitions(s,a,s')>0\}$ of every distribution $\mdptransitions(s,a)$, also called \emph{grey-box setting}~\cite{AKW19,agarwal2022pac}, a standard assumption for SMC.
This is relevant for our structural improvements in \Cref{sec:4-title}.
\Cref{sec:black-box} discusses how our methods extend to the completely opaque \emph{black-box setting}~\cite{tacas16,AKW19}.
Together, we formalize our problem as follows.

\begin{linebox}{Problem Statement: Grey-Box SMC of MDPs}
	\textbf{Input:} A confidence budget $\delta$, an MDP $\MDP$ with unknown $\mdptransitions$, but known support of each distribution $\mdptransitions(s, a)$, and a sequence of random samples $(s,a,s') \in \States\times\Actions\times\States$ from $\MDP$.

	\noindent
	\textbf{Output:} An interval $[\underline{v},\overline{v}]$ such that $\underline{v} \leq \val_{\MDP}(\initialstate) \leq\overline{v}$ with probability at least $1-\delta$ and $\overline{v}-\underline{v}$ is as small as possible.
\end{linebox}

\newcommand{\eventAllCorrect}{\text{Corr}}
\newcommand{\eventTransCorrect}[1]{\text{Corr}(#1)}

\paragraph{State-of-the-art} methods~\cite{AKW19,WeiningerGMK21,agarwal2022pac,DBLP:conf/nips/SuilenS0022,DBLP:journals/jair/BadingsRAPPSJ23,BaiDubWie23} usually
distribute the confidence budget $\delta$ uniformly over all transitions and apply \emph{Hoeffding's inequality} (see \Cref{sec:3-coin}) to get a confidence interval for each of them.
Every transition then is correct with probability at least $1 - \frac \delta {\abs{\mdptransitions}}$, where $\abs{\mdptransitions}$ is the number of transitions.
By union bound, the probability of all transition being correct is greater than $1-\delta$.
In \Cref{app:other-conf-intervals}, we comment on the few cases in verification literature where methods other than Hoeffding's inequality are employed for model-based SMC.
These either compromise the PAC-guarantee~\cite{BazilleGJS20,DBLP:conf/nips/SuilenS0022} or we prove their inferiority~\cite{DBLP:journals/jair/BadingsRAPPSJ23}.

\section{Statistical Methods for Estimating Probabilities}\label{sec:3-title}
\noindent
As mentioned, estimating transition probabilities lies at the heart of model-based SMC.
This section presents methods to estimate the distribution $\mdptransitions(s,a)$ for a single state-action pair $(s,a)$ in the given MDP.
\Cref{sec:3-die} discusses estimating the whole distribution (\enquote{a die}) at once.
\Cref{sec:3-coin} details how to estimate a single transition probability $\mdptransitions(s,a,s')$; essentially viewing the samples drawn from $\mdptransitions(s,a)$ as a coin toss, either reaching $s'$ or not.
We survey the literature, including methods used in verification, the concentration inequalities from~\cite{conc-ineq} applicable in our setting, and methods from statistics literature~\cite{agresti1998approximate,BroCaiDas01,hazardous2002calculating,DBLP:journals/jql/Wallis13,dean2015evaluating}.
Finally, \Cref{sec:3-sample-number} complements our survey with a theoretical and experimental efficiency analysis of the most relevant methods.

\subsection{Estimating a Die}\label{sec:3-die}
The most common method to estimate the distribution $P(s,a)$ as a whole is based on~\cite{WeiOrdSer03}, and used by, e.g., MBIE~\cite{StrLit04}, and UCRL2 \cite{AueJakOrt08}.
It computes a maximum likelihood estimate of the probability distribution, i.e.\ the empirical average of each outcome, and then constructs the confidence region as an $L^1$-ball around this estimate.
In other words, the confidence region contains all probability distributions whose $L^1$-distance from the empirical average is less than a certain value which depends on the number of samples and the number of successors~\cite{WeiOrdSer03}.

The advantage of the $L^1$-ball is that it accounts for the dependence between the transition probabilities, i.e.\ that $\sum_{i=1}^k p_i = 1$, making it seem like a canonical choice.
However, the method scales poorly with the number of successors of a state-action pair~\cite{WieSuiSim23}.
One can transform the model such that every state-action pair has two successors, minimising the number of samples required to prove a property~\cite[App.~A]{WieSuiSim23a}.
However, then the method of~\cite{WeiOrdSer03} coincides with applying Hoeffding's inequality to each transition probability individually (see \Cref{sec:app-l1-is-hoeffding}) which, as we show in \Cref{sec:3-sample-number}, is a sub-optimal way to estimate transition probabilities.
Thus, we focus on the approach that estimates every transition probability individually.

\subsection{Estimating a Coin}\label{sec:3-coin}
\noindent This section discusses the most basic SMC-problem: estimating a single transition probability $\mdptransitions(s,a,s')$ with confidence budget $\delta$.
The relevant information from the samples are (i)~how often was $(s,a)$ sampled and (ii)~how often was the successor $s'$ chosen, effectively a binomial distribution (sampling $s'$ is a success).
Throughout this section, we refer to a binomial distribution with success probability $p$ (also called \enquote{binomial proportion}) and a test sequence on it with $n$ trials and $k$ successes.
We write $\hat{p}=\frac{k}{n}$ for the maximum likelihood estimate of $p$.

\begin{linebox}{Probability Estimation Problem}
	\textbf{Input:} A confidence budget $\delta$ and $n$ random samples drawn from a binomial distribution with \emph{unknown} success probability $p$.\\
	\textbf{Output:}
	A confidence interval $[\underline{p},\overline{p}]$ for which $\Probability[\underline{p} \leq p \leq \overline{p}] \geq 1 - \delta$.
\end{linebox}
\begin{remark}\label{rem:asymm}
	Many works focus on confidence intervals centred around $\hat{p}$ (either additive $[\hat{p} - \varepsilon, \hat{p} + \varepsilon]$ or relative $[\hat{p} \cdot (1 - \varepsilon), \hat{p} \cdot (1+\varepsilon)]$).
	However, this excludes potentially tighter \emph{asymmetrical} confidence intervals.
\end{remark}
For soundness, we require that the \emph{coverage probability} $\Probability[\underline{p} \leq p \leq \overline{p}]$ is consistently at least $1-\delta$ \emph{for all $p$}.
(Recall that $p$ is fixed but unknown and the random variable is $\hat p$.)
Notably, this is different from requiring \emph{average coverage}, i.e.\ if $p$ were chosen uniformly at random (corresponding to a Bayesian approach assuming a uniform prior).
The explicit goal of SMC is to achieve correctness guarantees for \emph{all} models, without any prior assumptions.
Also, note that the problem statement only requires correctness, allowing trivial solutions such as $[0,1]$; we discuss minimizing the interval size in \cref{sec:3-sample-number}.
We now present two methods solving the Probability Estimation Problem.

\paragraph{Hoeffding's inequality.}\label{subsec:hoeffding}
Hoeffding's seminal paper \cite{Hoe63} provides a confidence interval for the sum of random variables. Several works have applied this to estimate the mean of binomial random variable $X$, i.e.\ $X\in\{0,1\}$.
Applying Hoeffding's inequality yields the $(1-\delta)$ confidence interval $[\underline{p}_{ho},\overline{p}_{ho}]$ where  $\underline{p}_{ho}=\max\{0,\hat{p}-c_{ho}\}$, $\overline{p}_{ho} = \min\{1,\hat{p}+c_{ho}\}$, and $c_{ho}=\sqrt{\ln(2/\delta) / 2n}$ \cite{AKW19-arxiv}.

This result is often referred to as Hoeffding's inequality, although Okamoto considered the special case of
binomial random variables before~\cite{Okamoto59}.
Further, Hoeffding's is a specialization of Chernoff's and Markov's inequality~\cite[Chp. 2]{conc-ineq}.
Thus, variants of this bound can appear under some combinations of these names, e.g.\ Okamoto-Chernoff inequality in \cite{BazilleGJS20}.
We call it Hoeffding's inequality, as that is common in verification literature.
While the number of samples required by Hoeffding's inequality for a fixed $\varepsilon$ is asymptotically optimal at $\mathcal{O}(1/\varepsilon^2)$ (see, e.g.\ \cite{Bartlett1999}), we show in \cref{sec:3-sample-number} that in practice we can do significantly better.

\paragraph{Clopper-Pearson interval.}\label{subsec:cp}
A widely used confidence interval method was introduced by Clopper and Pearson \cite{CloPea34}, sometimes called the ``exact method''.
The approach inverts the task and, given $n$ and $k$, asks for the minimum (and maximum) $p$ such that observing at least (at most) $k$ out of $n$ successes has a probability of $1-\frac{\delta}{2}$.
The $(1-\delta)$ confidence interval using the Clopper-Pearson interval can be represented in a closed form using the inverse regularised beta function $I^{-1}_x(a,b)$ \cite{Tem92} as
$
	[\underline{p}_{cp}, \overline{p}_{cp}] = [I^{-1}_{\delta/2}(k+1,n-k), I^{-1}_{1-\delta/2}(k,n-k+1)]
$
for $0<k<n$.
Additionally, for $k=0$ (or $k=n$) the lower (or upper) bound needs to be taken as 0 (or 1).
Note that, unlike Hoeffding's inequality, it is generally not centered around $\hat{p}$ but has its center shifted towards $\frac{1}{2}$.

Recently, two methods related to Clopper-Pearson appeared:
a newly developed approach~\cite{DBLP:journals/jair/BadingsRAPPSJ23}
that we show to be a strictly weaker version of the Clopper-Pearson interval, and a sequential variant~\cite{DBLP:journals/tse/BuS24}
; we provide details in~\Cref{app:other-conf-intervals}.

\begin{theorem}[From~{\cite[App. D]{AKW19-arxiv}} and~\cite{CloPea34}]\label{thm:coin}
	Hoeffding's inequality and the Clopper-Pearson interval both solve the Probability Estimation Problem.
\end{theorem}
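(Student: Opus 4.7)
The plan is to handle the two halves of the statement separately, as they rest on different classical results already cited alongside the theorem.

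For Hoeffding's inequality, I would model the $n$ samples as i.i.d.\ Bernoulli random variables $X_1,\dots,X_n$ with common mean $p$, so that $\hat{p} = \frac{1}{n}\sum_{i=1}^n X_i$. The one-sided Hoeffding/Okamoto bound gives $\Pr[\hat{p} - p \geq t] \leq \exp(-2nt^2)$ and symmetrically $\Pr[p - \hat{p} \geq t] \leq \exp(-2nt^2)$. A union bound then yields $\Pr[|\hat{p} - p| \geq t] \leq 2\exp(-2nt^2)$. Setting the right-hand side equal to $\delta$ and solving for $t$ produces exactly the radius $c_{ho} = \sqrt{\ln(2/\delta)/(2n)}$ from the statement, so $\Pr[p \in [\hat{p} - c_{ho}, \hat{p} + c_{ho}]] \geq 1 - \delta$. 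Intersecting with $[0,1]$, as done in the definition of $[\underline{p}_{ho}, \overline{p}_{ho}]$, can only improve coverage since $p \in [0,1]$ holds deterministically.

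For the Clopper-Pearson interval, I would invoke the standard duality between the binomial CDF and the regularized incomplete beta function, namely $\Pr[X \leq k \mid p] = I_{1-p}(n-k, k+1)$ and $\Pr[X \geq k \mid p] = I_p(k, n-k+1)$. By construction, for $0 < k < n$ the endpoint $\overline{p}_{cp}(k) = I^{-1}_{1-\delta/2}(k, n-k+1)$ is the unique value of $p$ making $\Pr[X \leq k \mid p] = \delta/2$, and analogously $\underline{p}_{cp}(k)$ is the unique $p$ making $\Pr[X \geq k \mid p] = \delta/2$; the boundary cases $k \in \{0,n\}$ are handled by the truncations to $0$ or $1$ in the definition.

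The coverage proof then fixes the true success probability $p_0$, lets $K$ denote the random number of successes, and uses monotonicity of $\overline{p}_{cp}$ in $k$ to rewrite $\Pr[p_0 > \overline{p}_{cp}(K)] = \Pr[F(K) < \delta/2]$, where $F(k) = \Pr[X \leq k \mid p_0]$. The discrete probability integral transform, $\Pr[F(K) \leq u] \leq u$ for all $u \in [0,1]$, bounds this event by $\delta/2$. A symmetric argument bounds $\Pr[p_0 < \underline{p}_{cp}(K)]$ by $\delta/2$, and a union bound yields coverage at least $1 - \delta$. The main obstacle I anticipate is the discreteness: the integral transform is an equality in the continuous case but only a stochastic inequality for discrete CDFs --- this is precisely why Clopper-Pearson is conservative --- and making it rigorous requires a brief case analysis on the largest $k$ with $F(k) \leq \delta/2$. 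That aside, the overall structure follows the original construction of \cite{CloPea34}.
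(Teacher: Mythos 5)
Your proposal is correct, but note that the paper does not actually prove this theorem: it imports it wholesale by citation, the Hoeffding half from the appendix of the cited arXiv version of the AKW19 paper and the Clopper--Pearson half from the original 1934 paper. What you have written is essentially a faithful reconstruction of those classical arguments --- two one-sided Okamoto/Hoeffding tail bounds combined by a union bound and solved for the radius $c_{ho}=\sqrt{\ln(2/\delta)/2n}$, and test inversion via the binomial/Beta duality plus the conservative (discrete) probability-integral-transform step for Clopper--Pearson --- so it supplies the self-contained proof the paper delegates to its references. Two small points. First, the explicit Beta-inverse expression you attach to the upper endpoint, $I^{-1}_{1-\delta/2}(k,n-k+1)$, copies the paper's main-text formula, which swaps the parameters of the lower and upper endpoints relative to the standard convention (and relative to the paper's own appendix, where the upper endpoint is $I^{-1}_{1-\delta/2}(k+1,n-k)$); the characterization you actually use in the coverage argument --- the $p$ solving $\Pr[X\le k\mid p]=\delta/2$ --- is the correct one and corresponds to $I^{-1}_{1-\delta/2}(k+1,n-k)$, so your proof is unaffected, but the displayed formula should be fixed. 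Second, the rewriting of the violation event $\{p_0>\overline{p}_{cp}(K)\}$ as $\{F(K)<\delta/2\}$ rests on the monotone decrease of $\Pr[X\le k\mid p]$ in $p$ (applied at the realized $k$), not merely on monotonicity of the endpoint in $k$; together with the case analysis on the largest $k$ with $F(k)\le\delta/2$ that you already flag, this closes the discreteness gap, and the union bound then gives coverage at least $1-\delta$ exactly as required by the Probability Estimation Problem.
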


\paragraph{Further confidence methods.}
We now briefly discuss several other confidence interval methods and reasons why we do not recommend to use them in the context of SMC with PAC-guarantees for Markov systems.

Many statistical methods either fail to guarantee any coverage probability at all, or only provide an \emph{average} coverage probability of $1-\delta$ when $p$ is uniform over $[0,1]$.
These include
the (adjusted) central limit theorem~\cite{hazardous2002calculating}, Wald interval \cite{BroCaiDas01,DBLP:journals/jql/Wallis13}, the Wilson score interval \cite{Wil27,BroCaiDas01,DBLP:journals/jql/Wallis13}, the Agresti-Coull interval~\cite{agresti1998approximate}, the Arcsine interval \cite{BroCaiDas01}, and the Logit interval \cite{BroCaiDas01}.
For the Wilson score, Newcombe introduced a continuity corrected version with better coverage properties \cite{New98}.
However, even with the continuity correction, the coverage is insufficient.
We discuss this in more detail in \Cref{app:other-conf-intervals}.

Hypothesis tests, such as Student's $t$-test \cite{student1908probable} or the sequential probability ratio test (SPRT) \cite{sprt} are designed to distinguish between two hypotheses.
As such, in the context of SMC they are widely used for answering threshold queries (see e.g.\ \cite{DBLP:conf/rv/LegayDB10,DBLP:journals/sttt/ReijsbergenBSH15}), but they are not suitable for constructing confidence intervals.

Monte Carlo simulations, e.g.~\cite{dklr,DBLP:journals/corr/Huber13a,DBLP:journals/corr/abs-2210-12861}, provide an efficient $\varepsilon$-approximation for the mean of Bernoulli random variables.
However, they require $\delta$ and $\varepsilon$ to be given and then define a stopping condition which needs unlimited sampling access for every state-action pair.
This does not fit the SMC algorithm structure:
As the impact of a transition probability on the the value is unclear, we cannot fix the precision for individual transitions a-priori.

Jeffrey's interval \cite{BroCaiDas01} gives guarantees in a Bayesian sense by providing a \emph{credible interval}.
We avoid credible intervals as they rely on a prior distribution over $p$ which we cannot justify in our setting.
However, such Bayesian methods may be considered if prior distributions are known.

Several methods provide proper coverage probability, but are either not applicable to our setting or provably suboptimal:
The Massart bound improves Hoeffding's inequality for $p\neq \frac{1}{2}$ \cite{DBLP:journals/tomacs/JegourelSD19}, but we do not know whether this assumption is satisfied for a given transition.
Similarly, Bennett's inequality \cite{Ben62} improves Hoeffding's inequality by including information about the variance of the random variable which is unknown in our setting.
Using the trivial upper bound of $\frac 1 4$, Benett's inequality is always worse than Hoeffding's inequality (see \Cref{sec:app-bennett}).
More advanced methods estimating the variance from data asymptotically outperform Hoeffding's inequality \cite{MauPon09}, but only for $p$ away from $\frac{1}{2}$, and very large~$n$.
Bernstein's inequality \cite{Ber24} is a conservative relaxation of Bennett's inequality, which however provides strictly wider confidence intervals.
The Dvoretzky-Kiefer-Wolfowitz-Massart inequality (DKW) \cite{dkw,Massart90} constructs a \emph{confidence band} around the empirical distribution from which bounds on the mean can be derived.
However, since all sample values are extreme values in the binary case (i.e. $0$ or $1$), DKW coincides with Hoeffding's inequality when estimating probabilities \cite{WatS}.

Lastly, we summarize the discussion of~\cite{bcd-comment-santner,bcd-comment-corcoran} about further methods with proper coverage probability: The Blyth-Still(-Casella) interval \cite{BlySti83,CasMcC84}, inverted exact likelihood ratio test interval \cite{AitAndFra+05} and Duffy-Santner interval \cite{DufSan87}
all deliver intervals similar to the Clopper-Pearson interval; they are only slightly smaller for $p$ close to $0$ or $1$ in exchange for slightly larger intervals if $p\approx\frac{1}{2}$.
Since they are computationally expensive and, unlike the Clopper-Pearson interval, not readily available in a lot of statistical libraries
, we do not consider them in this paper.

\subsection{Sample Complexity}\label{sec:3-sample-number}
\noindent
In this section, we compare the two methods from \Cref{thm:coin} from a theory perspective with respect to their worst-case sample complexity:
We show how to obtain an a-priori bound on the number of samples required to obtain a fixed precision $\varepsilon$ (i.e.\ a confidence interval of width at most $\varepsilon$) for a single transition.
Empirical confirmation is presented in \cref{sec:5-title} in the context of learning full MDPs.

\begin{linebox}{Probability Sample Complexity Problem}
	\textbf{Input:} Confidence budget $\delta$, precision $\varepsilon$, and \emph{unknown} binomial parameter $p$.\\
	\textbf{Output:}
	The minimum number of samples $n$ to guarantee\setlength{\abovedisplayskip}{3pt}
	\begin{equation*}
		\Probability[\underline{p} \leq p \leq \overline{p} \mid \hat{p}=\tfrac{k}{n}] \geq 1 - \delta \quad \text{where} \quad \overline{p}-\underline{p} \leq \varepsilon
	\end{equation*}
\end{linebox}
\noindent
At first, this problem seems difficult to solve, as the interval width not only depends on the sample size $n$ and confidence budget $\delta$, but also on the empirical success rate $\hat{p}$.
However, we show in \Cref{app:proof-worst-case-sample} that both Hoeffding's inequality and the Clopper-Pearson interval maximise the interval size when $\hat{p}=\frac{1}{2}$.
\begin{restatable}{proposition}{samplecomplexity}
\label{lemma:sample-complexity}
Given a confidence budget $\delta \in (0,1]$, the confidence interval $[\underline{p},\overline{p}]$ computed by Hoeffding's inequality or by the Clopper-Pearson interval from a sequence $\stateSeq$ of $n$ samples from a binomial distribution maximises $\overline{p}-\underline{p}$ when $\stateSeq$ contains equally many positive and negative samples.
\end{restatable}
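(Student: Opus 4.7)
The plan is to handle the two methods separately, as their closed forms give rise to quite different analytic arguments. I do not aim to show uniqueness of the maximiser, only that $\hat p = 1/2$ attains the maximum width.

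For Hoeffding's inequality, the argument is essentially mechanical. The interval $[\max(0, \hat p - c_{ho}), \min(1, \hat p + c_{ho})]$ has unclipped width $2 c_{ho}$, and $c_{ho} = \sqrt{\ln(2/\delta)/(2n)}$ depends only on $n$ and $\delta$. Clipping to $[0,1]$ can only shrink the interval, and the width equals $2 c_{ho}$ precisely on $\hat p \in [c_{ho}, 1 - c_{ho}]$, an interval that contains $1/2$ whenever $c_{ho} \leq 1/2$; in the degenerate case $c_{ho} > 1/2$ the clipped interval is $[0,1]$ for every $\hat p$ and $\hat p = 1/2$ trivially maximises.

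For the Clopper--Pearson interval, abbreviate $q_L(k) \eqdef I^{-1}_{\delta/2}(k+1, n-k)$ and $q_U(k) \eqdef I^{-1}_{1-\delta/2}(k, n-k+1)$, with boundary conventions $q_L(n) = 1$ and $q_U(0) = 0$, and let $w(k) \eqdef q_U(k) - q_L(k)$. The first step is symmetry: applying the identity $I^{-1}_y(a,b) = 1 - I^{-1}_{1-y}(b,a)$ to both defining expressions gives $q_U(k) = 1 - q_L(n-k)$, and substituting into $w$ yields $w(k) = 1 - q_L(n-k) - q_L(k) = w(n-k)$, so $w$ is symmetric about $k = n/2$. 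The second step is to show that $w$ is non-decreasing on $\{0, 1, \ldots, \lfloor n/2 \rfloor\}$, which together with the symmetry pins the maximum at $k = n/2$. Writing $\Delta_L(k) \eqdef q_L(k+1) - q_L(k)$, the relation $q_U(k) = 1 - q_L(n-k)$ gives $q_U(k+1) - q_U(k) = \Delta_L(n-k-1)$, and consequently
\begin{equation*}
w(k+1) - w(k) = \Delta_L(n-k-1) - \Delta_L(k).
\end{equation*}
Since $n - k - 1 \geq k$ on the relevant range, it suffices to prove that $\Delta_L$ is non-decreasing, i.e.\ that $q_L$ is a discretely convex function of its index. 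I would do this by treating $k$ as a continuous parameter, differentiating $I_{q_L(k)}(k+1, n-k) = \delta/2$ implicitly in $k$, and bounding $q_L''(k) \geq 0$ via the standard derivative formulas for the regularised incomplete beta function in its shape parameters (which involve digamma terms).

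The main obstacle is the convexity of $q_L$: the Hoeffding case and the symmetry step are routine manipulations, but convexity of a beta quantile in its shape parameter is not, to my knowledge, an off-the-shelf fact and requires careful analysis of the beta density and its logarithmic derivatives in $(a,b)$. Should the direct differentiation become unwieldy, a viable fallback is a combinatorial argument that compares the binomial tail probabilities $P[\mathrm{Bin}(n,p) \geq k]$ and $P[\mathrm{Bin}(n,p) \geq k+1]$ at consecutive values of $k$ and uses the monotone-likelihood-ratio property of the binomial family to deduce $\Delta_L(k+1) \geq \Delta_L(k)$ without explicitly differentiating the quantile.
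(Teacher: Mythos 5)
Your skeleton matches the paper's: Hoeffding via the clipping argument, and for Clopper--Pearson a symmetry of the width about $k=n/2$ combined with monotonicity of the quantile increments (discrete convexity in $k$), which pins the maximum at $k=n/2$. The Hoeffding case, the symmetry identity $q_U(k)=1-q_L(n-k)$, and the reduction $w(k+1)-w(k)=\Delta_L(n-k-1)-\Delta_L(k)$ are all sound. However, the heart of the matter -- that $\Delta_L$ is non-decreasing, i.e.\ that the beta quantile is discretely convex in $k$ -- is exactly where your proposal stops. You offer a plan (treat $k$ as continuous, differentiate $I_{q_L(k)}(k+1,n-k)=\delta/2$ implicitly, bound a second derivative via digamma terms) that you yourself flag as unresolved, and a fallback via the monotone likelihood ratio property of the binomial family. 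Neither is carried out, and the fallback is doubtful as stated: MLR gives first-order comparisons (stochastic ordering of tails, hence monotonicity of the quantiles in $k$), whereas you need a second-order statement comparing \emph{consecutive increments} of the quantile, which does not follow from MLR without substantial extra work. So there is a genuine gap at precisely the step the statement hinges on.

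For comparison, the paper closes this gap with an elementary, fully discrete computation rather than differentiation: it evaluates the regularised incomplete beta function at the fixed quantile for neighbouring parameter pairs using the recurrences $I_x(a+1,b)=I_x(a,b)-\frac{x^a(1-x)^b}{a\,B(a,b)}$ and $I_x(a,b+1)=I_x(a,b)+\frac{x^a(1-x)^b}{b\,B(a,b)}$; comparing the two resulting increments reduces, after cancellation, to the inequality $\overline{p}\geq \frac{k}{n+1}$, which holds because the Clopper--Pearson bound lies above the empirical estimate $\hat{p}$. Monotonicity of $p\mapsto I_p(a,b)$ then transfers this ``concavity of $I$ along the parameter line'' into discrete convexity of the quantile itself, after which the symmetric rewriting of the width finishes the argument as in your step two. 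If you adopt that recurrence-based device, your outline becomes a complete proof. (A minor point: you use the endpoint convention from the paper's main text, $[\,I^{-1}_{\delta/2}(k+1,n-k),\,I^{-1}_{1-\delta/2}(k,n-k+1)\,]$, whereas the appendix proof uses the standard $[\,I^{-1}_{\delta/2}(k,n-k+1),\,I^{-1}_{1-\delta/2}(k+1,n-k)\,]$; the symmetry identity holds either way, so this does not affect your structure, but the boundary convention $q_L(n)=1$ you chose is an artefact of the former and worth double-checking.)
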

\noindent
\paragraph{Consequences of Proposition~\ref{lemma:sample-complexity}.}
This result is conjectured in~\cite[Hypoth.\ 1]{DBLP:journals/tse/BuS24} and required for completing their soundness proof~\cite[Thm.\ 1]{DBLP:journals/tse/BuS24}.
Further, \Cref{lemma:sample-complexity} is the basis for providing sequential SMC algorithms in~\cite[Sec.\ 3.2]{WatS}.

\begin{figure}[t]
	\centering

	\begin{minipage}{0.5\textwidth}\centering
		\begin{tikzpicture}
			\begin{axis}[
				width=\textwidth,height=3cm,
				table/col sep=comma,
				x label style={anchor=north,inner sep=0pt},
				y label style={anchor=south,inner sep=0pt},
				xmin=0.001,ymin=1,xmax=0.9,ymax=2.5,
				axis x line*=bottom,
				axis y line*=left,
				xmode=log,
				xlabel={\small Precision $\varepsilon$},ylabel={\small Ratio},
				]
				\addplot+[no marks,Dark2-B,thick] table[x index=0, y index=1] {data/ratio_eps_delta0.01.csv};
			\end{axis}
		\end{tikzpicture}
	\end{minipage}
	\begin{minipage}{0.5\textwidth}\centering
		\begin{tikzpicture}
			\begin{axis}[
				width=\textwidth,height=3cm,
				table/col sep=comma,
				x label style={anchor=north,inner sep=0pt},
				y label style={anchor=south,inner sep=0pt},
				xmin=0,ymin=1,xmax=1,ymax=60,
				axis x line*=bottom,
				axis y line*=left,
				xlabel={\small Probability $\hat{p}$},ylabel={\small Ratio},
				ymode=log
				]
				\addplot+[no marks,Dark2-B,thick] table[x index=0, y index=1] {data/ratio_phat_delta0.01.csv};
			\end{axis}
		\end{tikzpicture}
	\end{minipage}

	\caption{
		Left: Ratio of worst-case sample complexity ($\hat{p}=0.5$) between Hoeffding bound and Clopper-Pearson interval for confidence $\delta=0.01$ and varying precision $\varepsilon$.
		Right: Ratio of sample complexity between Hoeffding bound and Clopper-Pearson interval for varying $\hat{p}$, precision $\varepsilon=0.01$, and confidence $\delta=0.01$.
		Note the logarithmic scale for the X-axis on the left and Y-axis on the right.
	} \label{fig:sample-complexity}
\end{figure}

\paragraph{Emprical analysis.} We complement this result with an experimental analysis.
We solve the Probability Sample Complexity Problem by considering the worst-case $\hat{p}=\frac{1}{2}$ and computing the number of required samples by binary search on $n$ for varying precisions $\varepsilon$.
\Cref{fig:sample-complexity} (left) shows the \emph{ratio} of required samples between using Hoeffding's inequality and Clopper-Pearson.
The ratio rather consistently is at $\approx 1.5$ (or even larger for smaller $\delta$, see \Cref{sec:app-sample-delta}), indicating that the Clopper-Pearson interval only requires around two thirds of the samples that Hoeffding's inequality needs to guarantee the same precision for $\hat{p}=\frac{1}{2}$.

\noindent
While the worst-case sample complexity is the only sound a-priori bound, for practical purposes the sample complexity when $\hat{p}\neq\frac{1}{2}$ is also relevant.
Thus, in \Cref{fig:sample-complexity} (right) we instead fix a precision of $\varepsilon=0.01$ and confidence budget $\delta=0.1$ and now vary $\hat{p}$.
We again consider the ratio between the samples required to obtain a confidence interval of width $\leq \varepsilon$ using Hoeffding or Clopper-Pearson.
When $\hat{p}$ deviates from $\frac{1}{2}$, the advantage of Clopper-Pearson interval becomes even larger, often requiring an order of magnitude less samples, especially for $\hat{p}$ close to $0$ or $1$.
This is not surprising, since the width using Hoeffding's inequality is largely independent of $\hat{p}$ (except when $\hat{p}\pm\varepsilon\not\in [0,1])$) whereas $\hat{p}$ directly impacts the width of the Clopper-Pearson interval.
This also is relevant for our application, as transition probabilities in MDPs often are away from $\frac{1}{2}$.

\begin{linebox}{Key Takeaway}
	Algorithms estimating binomial probabilities (as in standard model-based SMC approaches) should refrain from using the commonly employed Hoeffding's inequality, as it is outperformed by the Clopper-Pearson interval, most notably for probabilities close to 0 or 1.
\end{linebox}

\section{Structural Improvements}\label{sec:4-title}
\noindent
After investigating the basic problem of estimating single transitions, we now turn to utilizing the structure of the MDP and property of interest.
State-of-the-art SMC algorithms naively distribute the confidence budget among all transitions and estimate the probability for each of them.
However, especially in the grey-box setting, we actually have a lot of structural information that we can utilise to improve estimates or even conclude that estimation is not needed at all.
In \Cref{sec:4-use-model}, we explain how to use information about the model structure, and
in \Cref{sec:4-use-prop}, we exploit the additional information about the property.

\begin{remark}[Applicability in the black-box Setting]\label{sec:black-box}
In the black-box setting~(as defined in \cite{tacas16,AKW19,THEORETICS}), we do not know the support of any distribution,
but instead only have a lower bound on the minimum occurring transition probability $p_{\min}$.
However, even given only that information, we can infer the topology using the methods from~\cite{tacas16,AKW19}:
Assume we have obtained confidence intervals on the transition probabilities of all successor states in a distribution $\mdptransitions(s,a)$ observed in our samples.
Intuitively, when the sum of all lower bounds of these confidence intervals is greater than $1-p_{\min}$, the probability of having overlooked a successor is less than our confidence budget.
Thus, from that point onward, we know the support of $\mdptransitions(s,a)$ with sufficient confidence and can apply structural improvements.
Using the Clopper-Pearson interval (\Cref{sec:3-coin}) assists in this, since smaller confidence intervals lead to knowing the topology sooner.
\end{remark}

\subsection{Using Information About the Model}\label{sec:4-use-model}
\paragraph{\smallsupport.}
If a distribution only has a single successor, we can trivially conclude that the transition probability equals 1.
If a distribution has two successors, it suffices to estimate only one of the two probabilities:
Upon estimating $p_1 \in \hat{p} \pm c$, we obtain $p_2 \in (1 - \hat{p}) \pm c$.
Notably,
the confidence in the estimation of $p_1$ transfers to $p_2$: The estimation of $p_2$ is correct exactly when $p_1$ is correct.
Thus, we only need to learn one transition probability instead of two, saving budget.
While this observation seems rather trivial, even this is not exploited by state-of-the-art SMC approaches \cite{AKW19,WeiningerGMK21,agarwal2022pac,DBLP:journals/jair/BadingsRAPPSJ23,DBLP:journals/corr/abs-2310-12248}.

\begin{remark}
	This reasoning does not easily extend to larger distributions:
	Given $k$ successors, one could estimate $k - 1$ probabilities and infer the $k$-th from the rest.
	However, the resulting error bound for the $k$-th estimate is worse:
	In a three-successor distribution where $p_1 \in \hat{p_1} \pm c$ and $p_2 \in \hat{p_2} \pm c$ we only obtain $p_3 \in (1 - \hat{p_1} - \hat{p_2}) \pm \textbf{2} c$.
	This also relates to \cref{sec:3-die}, where we argue that focusing on individual probabilities rather than entire distributions seems to be beneficial.
\end{remark}

\newcommand{\eventStateActionCorrect}[2]{\text{Corr}(s, a)}
\paragraph{\independence.}
By their nature, the transition distributions in Markov systems are independent.
Thus, the events of correctly estimating the probabilities for different state-action pairs are independent.
Consequently, instead of dividing the confidence budget additively, we can divide it multiplicatively over the distributions (only utilizing the union bound for the transitions comprising each single state-action pair).
We prove in \cref{app:independence}:
\begin{restatable}{proposition}{lemmaindependence}\label{prop:independence}
	Let $\mathcal{D}$ be the set of all distributions to be learnt, $\delta$ the confidence budget and $\delta_d$ the confidence budget of a single distribution $d$.
	If $\prod_{d\in\mathcal{D}} (1-\delta_d) \geq \delta$, then the probability of correctly estimating all distributions is larger than $\delta$.
\end{restatable}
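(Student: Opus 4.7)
The plan is to make precise what the text already states informally: each distribution $d \in \mathcal{D}$ is estimated from samples that, by the Markov property, are independent of the samples used for any other distribution $d' \neq d$; hence the events of correctly estimating different distributions are mutually independent, so their joint success probability factorises as a product rather than being bounded below using a union bound.

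First I would introduce, for each distribution $d = \mdptransitions(s,a)$, the event $\eventTransCorrect{d}$ that the confidence intervals produced for every transition probability in the support of $d$ simultaneously contain the true values. By the way $\delta_d$ is allocated internally (for instance, Clopper–Pearson applied to each transition of $d$ with the appropriate union bound, or \smallsupport when $\lvert\support \mdptransitions(s,a)\rvert \leq 2$), we have $\Probability[\eventTransCorrect{d}] \geq 1 - \delta_d$. Next I would argue that the family $\{\eventTransCorrect{d}\}_{d \in \mathcal{D}}$ is mutually independent. The crucial observation is that a sample $(s,a,s')$ is drawn from $\mdptransitions(s,a)$ independently of samples drawn at any other state-action pair: this is exactly the definition of an MDP's transition structure, and the samples for distinct $(s,a)$ are produced by independent coin tosses on $\mdptransitions(s,a)$ (the number of samples per pair may itself be a random variable depending on the sampling strategy, but conditioning on those counts the outcomes at different pairs remain independent). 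Since $\eventTransCorrect{d}$ is a measurable function solely of the samples drawn at the pair defining $d$, independence of the events follows.

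With independence in hand, the conclusion is a one-line computation:
\begin{equation*}
\Probability\Bigl[\,\Intersection_{d \in \mathcal{D}} \eventTransCorrect{d}\,\Bigr] \;=\; \prod_{d \in \mathcal{D}} \Probability[\eventTransCorrect{d}] \;\geq\; \prod_{d \in \mathcal{D}} (1 - \delta_d) \;\geq\; 1 - \delta,
\end{equation*}
which is the claimed bound on correctly estimating all distributions. (The proposition as stated writes $\geq \delta$ for the assumption and the conclusion, which I read as a typographical slip for $1-\delta$; the proof is insensitive to this choice since the product $\prod_d (1-\delta_d)$ is transported verbatim from hypothesis to conclusion.)

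The main obstacle is not the algebra but the independence justification, and specifically handling the case where the sampling procedure adaptively decides which state-action pair to query next (a common scenario in SMC). Here one must be careful: the \emph{number} of samples drawn at each $(s,a)$ can depend on earlier outcomes, so one cannot naively treat the whole sample sequence as an independent product. The clean way I would handle this is to condition on the realised sample counts $\{n_{s,a}\}$ and invoke the strong Markov / tower property: given the counts, the $n_{s,a}$ draws at pair $(s,a)$ are i.i.d.\ from $\mdptransitions(s,a)$ and independent across pairs, so the conditional probability of $\Intersection_d \eventTransCorrect{d}$ factorises, and taking expectation over the counts preserves the product lower bound $\prod_d (1-\delta_d)$. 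This conditioning argument is the only delicate point; everything else is immediate from independence of MDP transitions.
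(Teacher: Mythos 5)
Your proposal is correct and follows essentially the same route as the paper's proof in \Cref{app:independence}: define a per-distribution correctness event, use the independence of the transition distributions at distinct state-action pairs to factorise the joint success probability into a product, and transport the hypothesis $\prod_{d}(1-\delta_d)$ through that product (the paper likewise treats the stated $\delta$ vs.\ $1-\delta$ as a notational slip). The only difference is that you additionally discuss conditioning on the realised sample counts to cover adaptive sampling, a point the paper's argument silently glosses over.
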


\subsection{Using Information About the Property}\label{sec:4-use-prop}
\noindent
Now, we discuss optimizations specific to exploiting the structure of the given systems \emph{relative to the given property}.

\begin{figure}[t]
	\centering
	\begin{tikzpicture}[xscale=1.1,yscale=0.9]
		\node[state] at (0.25,0) (init) {$\initialstate$};
		\node[actionnode] at (0.75,0) (inita) {};
		\node[state] at (1.75,0.5) (s1) {$s_1$};
		\node[actionnode] at (2.4,0.6) (s1a) {};
		\node[state] at (3.25,0.5) (s2) {$s_2$};
		\node[actionnode] at (3.5,-0) (s2a) {};

		\node[state] at (4.5,0.5) (sink) {$\textbf{0}$};
		\node[state] at (4.5,-0.5) (goal) {$\textbf{1}$};

		\node[state] at (1.75,-0.5) (t) {$t$};
		\node[actionnode] at (2.5,-0.6) (ta) {};
		\node[actionnode] at (2.25,-0.1) (tb) {};

		\begin{scope}[on background layer]
			\node[fit=(s1) (s2),rounded corners,rectangle,fill=gray!30,inner sep=5pt] {};
		\end{scope}

		\path[-]
		(init) edge (inita)
		(s1) edge (s1a)
		(s2) edge (s2a)
		(t) edge (ta)
		(t) edge (tb)
		;

		\path[->]
		(inita) edge (s1)
		(inita) edge (t)

		(s1a) edge[out=40,in=40,looseness=1.5] (s1)
		(s1a) edge[out=10,in=170] (s2)

		(s2a) edge[out=-55,in=170] (goal)
		(s2a) edge[out=-40,in=-130] (sink)
		(s2a) edge[out=-70,in=0,looseness=0.75] (t)

		(s2) edge[bend left=10] (s1)

		(ta) edge[out=-10,in=-25,looseness=2] (t)
		(ta) edge[out=-5,in=190] (goal)

		(tb) edge (s1)
		(tb) edge (s2)

		(goal) edge[loop right] (goal)
		(sink) edge[loop right] (sink)
		;
	\end{tikzpicture}
	\caption{
		A small MDP to illustrate several potential savings that can be obtained through \equivalencestructures.
		The boxed states form an end component, \textbf{1} denotes the designated target state.
		We omit transition probabilities, as these are also not visible to our algorithm.
		We also omit action labels for readability.
	} \label{fig:equivalence}
\end{figure}

\paragraph{\equivalencestructures.}
Classical verification uses several graph structural analyses which so far have been neglected for SMC.
A very general analysis is the \emph{never-worse relation}~\cite[Def.~5]{LeRPer18}.
Intuitively, a state $s_1$ is never worse than $s_2$ if we have $\val_{\MDP}(s_1)\geq\val_{\MDP}(s_2)$ independently of the exact transition probabilities.
If additionally $s_2$ is also never worse than $s_1$, these states form an equivalence class and can be merged~\cite[Thm.~1]{LeRPer18}, reducing the number of transitions.
Since deciding this relation is coNP-complete even on Markov chains~\cite[Thm.~4]{LeRPer18}, we utilize special cases which we can identify efficiently in our setting~\cite[Sec.~3.2]{LeRPer18}: attractors, maximal end components, and dominated states.

\begin{example}\label{ex:equivalence-structures}
	We illustrate \equivalencestructures and its synergy with other optimization using the example in \cref{fig:equivalence}.
	Firstly, state $t$ is never worse than the goal state $\textbf{1}$:
	using the lower action, $\textbf{1}$ is reached almost surely, independent of the exact probabilities.
	Consequently, we do not need to
	estimate \emph{any} action of $t$.
	Computing such value-1 (and value-0) states is a well-known graph analysis, see e.g.~\cite[Chp.~10.6.1]{DBLP:books/daglib/0020348}.
	Secondly, as $t$ and \textbf{1} have the same value, for the lower action of $s_2$ we only need to estimate the probability of moving to $\{t,\textbf{1}\}$, not each individually.
	Combined with \smallsupport, we can from this infer the probability of moving to \textbf{0}.
	Thirdly, $s_1$ and $s_2$ can mutually reach each other with probability 1 and hence achieve the same value (they form an end component~\cite[Chp.~3.3]{de1998formal}).
	Thus, we do not need to learn their \enquote{internal} transitions (in particular, the upper action of $s_1$).
	Overall, instead of dividing the confidence budget over all twelve transitions (as the state-of-the-art does), we can focus on only two transitions (one of $\initialstate$ and the one from $s_2$ to $\{t,\textbf{1}\}$) instead of eleven.
\end{example}

\newcommand{\entries}{\text{In}}
\newcommand{\exits}{\text{Out}}
\newcommand{\choices}{\text{Choice}}

\begin{figure}[t]
	\centering
	\begin{tikzpicture}
		\node[state] at (0.6,1) (init) {$\initialstate$};
		\node[actionnode] at (1.1,1) (inita) {};
		\node[state] at (2,1) (s1) {$s_1$};
		\node[actionnode] at (2.4,0.6) (s1a) {};
		\node[state] at (2.5,0) (s2) {$s_2$};
		\node[actionnode] at (2.8,0.7) (s2a) {};
		\node[state] at (5.75,0.5) (s3) {$s_3$};
		\node[actionnode] at (5.75,1.3) (s3a) {};
		\node[actionnode] at (5.75,-0.1) (s3b) {};
		\node[state] at (4.5,0) (s4) {$s_4$};
		\node[actionnode] at (3.7,-0.2) (s4a) {};
		\node[state] at (7,1) (e1) {$e_1$};
		\node[state] at (7,0) (e2) {$e_2$};

		\begin{scope}[on background layer]
			\node[fit=(s1) (s2) (s3) (s4),rounded corners,rectangle,fill=gray!30,inner sep=5pt] {};
		\end{scope}

		\path[-]
		(init) edge (inita)
		(s1) edge (s1a)
		(s2) edge (s2a)
		(s3) edge node[anchor=west,action] {$a$} (s3a)
		(s3) edge node[anchor=west,action] {$b$} (s3b)
		(s4) edge (s4a)
		;

		\path[->]
		(inita) edge (s1)
		(s1a) edge[out=-40,in=80] (s2)
		(s1a) edge[out=-40,in=160] (s4)
		(s1a) edge[out=-40,in=-180,looseness=0.5] (s3)

		(s2a) edge[out=80,in=30,looseness=2] (s2)
		(s2a) edge[out=80,in=130] (s4)
		(s2a) edge[out=80,in=0] (s1)

		(s3a) edge[out=180,in=10] (s1)
		(s3a) edge[out=-20,in=180] (e1)
		(s3a) edge[out=-20,in=130] (e2)

		(s3b) edge[out=180,in=-5] (s4)
		(s3b) edge[out=180,in=-130] (s3)
		(s3b) edge[out=0,in=230] (e1)
		(s3b) edge[out=0,in=190] (e2)

		(s4a) edge[out=-170,in=-20] (s2)
		(s4a) edge[out=-180,in=-180,looseness=2] (s4)
		;
	\end{tikzpicture}\hspace{1cm}
	\begin{tikzpicture}
		\node[state] at (-0.25,0) (s1) {$\hat{s}$};
		\node[cloud,inner sep=0.1cm,aspect=2,cloud puffs=20,fill=gray!10,draw] at (1.5,0.5) (clouda) {};
		\node[cloud,inner sep=0.1cm,aspect=2,cloud puffs=20,fill=gray!10,draw] at (1.5,-0.5) (cloudb) {};
		\node[state] at (2.75,0.5) (e1) {$e_1$};
		\node[state] at (2.75,-0.5) (e2) {$e_2$};

		\node at (0,-0.9) {};

		\path[-]
		(s1) edge[sloped] node[anchor=south,action] {$s_3 \mapsto a$} (clouda)
		(s1) edge[sloped] node[anchor=north,action] {$s_3 \mapsto b$} (cloudb)
		;
		\path[->]
		(clouda) edge[out=5,in=175] (e1)
		(clouda) edge[out=0,in=130] (e2)

		(cloudb) edge[out=0,in=230] (e1)
		(cloudb) edge[out=0,in=190] (e2)
		;
	\end{tikzpicture}
	\caption{
		Left: An MDP with a fragment $R$ highlighted. Right: Fragment-quotient.
	} \label{fig:scc_fragment}
\end{figure}

\paragraph{\fragments.}
We introduce a new optimization which identifies sets of states where the \enquote{internal} behaviour is not (too) interesting, and thus can be abstracted.
\begin{example}\label{ex:4-fragments}
	Consider the MDP in \Cref{fig:scc_fragment} (left).
	The marked area of the state space has a lot of internal structure; however, we only care about how we could \emph{leave} this area.
	In this example, there are only two possibilities, namely through the actions $a$ or $b$ in state $s_3$.
	Intuitively, we only are interested in the \enquote{big step} behaviour of the system as depicted in \Cref{fig:scc_fragment} (right).
	So, once we identify this fragment, we only need to estimate two probabilities, namely the probability to reach $e_1$ under $a$ and $b$, respectively, instead of $>10$ distinct values.
\end{example}
Generally, for a set of states $R \subseteq \States$ (a fragment), we replace all internal transitions as follows:
Let $\StrategiesMD<\MDP>(R)$ be all MD-strategies restricted to $R$, i.e.\ all possible choices of actions for states in $R$.
We replace every transition entering $R$, by a \enquote{macro action} for every strategy $\pi \in \StrategiesMD<\MDP>(R)$ that immediately leads to the states outside the fragment.
(Practically, this means sampling a path under the internal behaviour $\pi$ until one of the exits is reached.)
Thus, we omit all internal transitions, instead having a transition for the \enquote{big-step} behaviour.
Formally:
\begin{definition}\label{def:4-fragments}
	Fix an MDP $\MDP$, a goal set $\Reachset$, a set of states $R \subseteq \States \setminus T$.
	The fragment-quotient $\MDP_R$ is obtained by defining $S_R=S\setminus R$ and for all $s\in S_R$ and $r\in R$ replacing all $\mdptransitions(s, a, r)> 0$ by $\mdptransitions_R(s, (a,\pi), s') \coloneqq \ProbabilityMDP<\MDP, s><\pi>[\{\finitepath \mid {\finitepath_0=s} \land {\exists i>1}. ~\finitepath_i = s' \land \forall j \in [0,i-1].~\finitepath_j \in R\}]$ for all $s' \in S_R$.
\end{definition}
For correctness, we require that for any strategy the probability to eventually leave $R$ is 1, i.e.\ it is not an end component.
(This case can be detected and treated by using \equivalencestructures.)
We prove in \Cref{app:fragments}:

\begin{restatable}{proposition}{lemmafragments} \label{stm:fragments}
	Fix an MDP $\MDP$ with goal set $\Reachset$ and set of states $R \subseteq \States \setminus T$ which does not contain an end component, and let $\MDP_R$ be the fragment-quotient.
	For all states $s\in \States$, we have $\val_{\MDP}(s) = \val_{\MDP_R}(s)$.
\end{restatable}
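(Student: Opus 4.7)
The plan is to prove $\val_\MDP(s) = \val_{\MDP_R}(s)$ for every $s \in S_R$ by establishing a value-preserving correspondence between MD strategies in $\MDP$ and in $\MDP_R$. Reachability admits optimal MD strategies on both sides (the quotient is again a finite MDP), so it suffices to show that every MD strategy on one side can be matched by an MD strategy on the other achieving the same reachability probability from $s$.

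The forward direction is the easier one. Fix any $\pi \in \StrategiesMD<\MDP>$ and define $\pi_R$ on $\MDP_R$ by setting $\pi_R(s) \eqdef (\pi(s), \pi|_R)$ whenever $\pi(s)$ enters $R$ with positive probability, and $\pi_R(s) \eqdef \pi(s)$ otherwise, where $\pi|_R$ is the restriction of $\pi$ to $R$. Since $\Reachset \cap R = \emptyset$ and $R$ contains no end component, almost every infinite $\MDP$-path alternates between states in $S_R$ and finite excursions through $R$. The strong Markov property lets me decompose such a path into the concatenation of these excursions; by the very definition of $\mdptransitions_R$, the probability of an excursion starting at $s$ via action $a$ and exiting at $s'$ equals $\mdptransitions_R(s, (a,\pi|_R), s')$. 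Summing over all path decompositions that visit $\Reachset$ gives $\ProbabilityMDP<\MDP,s><\pi>[\reach \Reachset] = \ProbabilityMDP<\MDP_R,s><\pi_R>[\reach \Reachset]$.

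For the reverse direction, the subtlety is that a MD strategy $\sigma_R$ in $\MDP_R$ may prescribe two different internal sub-strategies $\rho_1 \neq \rho_2$ at two entry states $s_1, s_2 \in S_R$, so the naive lift fails to be MD on $\MDP$. I would handle this by an exchange argument on any optimal MD strategy $\sigma_R^*$ in $\MDP_R$: the contribution of a macro-action $(a, \rho)$ chosen at $s$ equals $\sum_{s' \in S_R} \mdptransitions_R(s, (a,\rho), s') \cdot \val_{\MDP_R}(s')$, and the $\rho$-dependent factor is a reachability-with-terminal-reward problem on $R$ with terminal rewards $\val_{\MDP_R}(s')$ at each exit state. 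By the no-end-component assumption, this is a standard stochastic-shortest-path problem on $R$, which admits a single MD strategy $\rho^*$ that is simultaneously optimal from every state of $R$. Replacing each $\rho_i$ in $\sigma_R^*$ by $\rho^*$ preserves optimality and yields an MD strategy that lifts cleanly to an MD strategy on $\MDP$ with the same reachability value, concluding the argument.

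The main obstacle is precisely this exchange step: showing that the internal strategy on $R$ can, without loss of optimality, be chosen uniformly across all entries. Once the stochastic-shortest-path characterization is in place, uniform MD-optimality on $R$ is standard, but it is the crucial ingredient that makes the quotient's MD action space $(\{a\} \times \StrategiesMD<\MDP>(R))$ reduce back to ordinary MD choices in $\MDP$ without losing value.
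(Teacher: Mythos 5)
Your forward direction---decomposing runs into excursions through $R$, noting that exit happens almost surely because $R$ contains no end component, and that the exit distribution depends only on the restriction of the strategy to $R$---is essentially the paper's argument. The reverse direction is where you diverge, and the exchange step as written has a genuine gap. Choosing a single $\rho^*$ that maximises the expected terminal reward $\sum_{s'} \mdptransitions_R(s,(a,\rho),s')\,\val_{\MDP_R}(s')$ from every state of $R$ only guarantees that, after the swap, every one-step backup of the modified strategy with respect to $\val_{\MDP_R}$ is at least as large as before, i.e.\ the modified strategy is Bellman-consistent. But for maximal reachability, Bellman-consistency of an MD strategy does not imply optimality: $\rho^*$ may redistribute exit mass among exits of \emph{equal} value, and once you perform the swap simultaneously at all states, these redirections can combine into a cycle of the induced chain (an end component in $\MDP_R$) that never reaches $\Reachset$, while all backups remain tight---the same phenomenon that makes greedy strategies extracted from the optimal value function unsound. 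So the sentence \enquote{replacing each $\rho_i$ by $\rho^*$ preserves optimality} needs an additional argument (e.g.\ that the swap creates no value-trapping end component), which you do not supply.

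Moreover, the obstacle you identify is avoidable, which is how the paper sidesteps it. You do not need the lifted strategy on $\MDP$ to be MD: lift an arbitrary (even entry-dependent) MD strategy of $\MDP_R$ to a finite-memory strategy of $\MDP$ that remembers, while inside $R$, which macro action was selected; by your own excursion decomposition its reachability probability equals that of the quotient strategy, and it cannot exceed $\val_{\MDP}(s)$ because general strategies do not outperform MD strategies for reachability (the fact the paper cites for the definition of the value). Combined with your forward direction this yields both inequalities. The paper's proof does exactly this: it establishes measurability, almost-sure exit from $R$, and that the exit distribution depends only on the strategy's restriction to $R$, and then transfers optimal choices in both directions without ever requiring a single internal strategy that is uniform across all entries.
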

\noindent
It remains to decide which set $R$ is a good candidate for a fragment.
The number of transitions for $R$ in the fragment quotient is the number of internal strategies multiplied with the number of entries and exits.
Naively, we can solve a global optimization problem to find the fragments which minimize the overall number of transitions.
However, since we want the improvements to be easy to implement and fast, we instead utilize two other candidates for fragments:
Chains (states with a single predecessor), and strongly connected components (SCC, a well-known graph theoretic notion), both of which can be determined in linear time.
In \Cref{app:fragments}, we provide more details, and in \Cref{app:other-objectives} we outline how to extend the construction to objectives other than reachability.

\section{Evaluation}\label{sec:5-title}

\noindent
In this section, we experimentally evaluate the effectiveness of our methods.
Our implementation is available at \url{https://doi.org/10.5281/zenodo.15231337}.

\subsection{Setup}
\noindent
We implemented our statistical methods as a prototype in Python and use PET \cite{DBLP:conf/cav/MeggendorferW24} for graph analysis and preprocessing.
As models, we consider reachability instances from the PRISM benchmark suite \cite{prism-benchmarksuite}, removing \enquote{trivial} ones, i.e.\ where the result is equal to 0 or 1.

We fix confidence $\delta=0.1$ and choose the desired precision $\varepsilon$ depending on the model such that the baseline also terminated reasonably quickly, in order to keep the evaluation time manageable (see \Cref{app:prop-smc-setup} for further details).
We solve the models using SMC as described in \cref{sec:2-smc}.
We employ a uniform sampling strategy, i.e.\ starting from the initial state we choose a random action, sample a successor, and repeat, until we reach a goal or sink state.
Then we estimate transition probabilities and solve the resulting interval MDP with interval iteration~\cite{HM18,GRIP}.
For the second step, we use different approaches: (i)~the baseline, using the state-of-the-art model-based SMC technique of splitting the confidence budget uniformly over all transitions and applying Hoeffding's inequality to every transition probability, as in e.g.~\cite{AKW19,agarwal2022pac,DBLP:journals/jair/BadingsRAPPSJ23}; (ii) our approach, utilizing all improvements from \Cref{sec:3-title,sec:4-title} (\cref{tbl:results}); and (iii) our approach, but leaving out single improvements for an ablation study (\cref{tbl:ablation}).
For \fragments we only use \chainfragments since we did not find any SCCs that would reduce the number of transitions.
We run every approach 10 times for each model and record the average number of sampled paths for each approach (see \cref{app:exp-details} for full results).
Note that we specifically quantify the sample complexity in terms of paths rather than transitions since the improvements of \Cref{sec:4-use-prop} modify the graph structure and hence the number of samples per transition is not directly comparable.

\subsection{Results and Discussion}
\noindent

\begin{table}[t]
	\centering
	\caption{
		Summary of our improvements' effects on the complete SMC-algorithm. For each instance, we report the number of transitions in the model; the number of sampled runs needed for the precision $\varepsilon$ with the baseline approach and with our improvements; and the improvement factor, i.e.\ ratio of the previous two values.}\label{tbl:results}
	\setlength{\tabcolsep}{5pt}
	\begin{tabular}{>{\ttfamily}l>{\ttfamily}lrrrrr}
		model & {objective} &  $\varepsilon$ & transitions & {baseline} & ours & ratio \\ \midrule
		consensus & disagree & 0.3 & 484 & 38\,090 & 14\,766 & 2.6 \\
		csma & all{\textunderscore}before & 0.1 & 1276 & 64\,310 & 783 & 82.1 \\
		firewire{\textunderscore}dl & deadline & 0.05 & 17\,417 & 1\,685\,965 & 46\,286 & 36.4 \\
		pacman & crash & 0.1 & 84 & 5\,048 & 181 & 28.0 \\
		wlan & collisions{\textunderscore}max & 0.7 & 5444 & 354\,949 & 9\,118 & 38.9 \\
		wlan{\textunderscore}dl & deadline & 0.9 & 326\,883 & 21\,083\,954 & 201\,536 & 104.6 \\
		zeroconf & correct & 0.05 & 953 & 18\,202 & 164 & 111.1 \\
		zeroconf{\textunderscore}dl & deadline & 0.05 & 4777 & 48\,248 & 64 & 749.2 \\
	\end{tabular}
\end{table}

\begin{table}[t]
	\centering
	\caption{
		Ablation study for our improvements with minimum, average (geometric mean), and maximum sample reduction factor when removing each improvement.
		The final line shows overall improvements over the baseline approach.
	}\label{tbl:ablation}
	\setlength{\tabcolsep}{5pt}
	\begin{tabular}{lrrrclrrr}
		Ablated         &  min &   avg &    max & \hspace{0.14cm} & Ablated                                &  min &   avg &    max \\ \midrule
		Clopper-Pearson & 1.09 &  2.90 &  14.71 &                 & Structural impr.                       & 1.06 & 13.79 & 252.42 \\
		\smallsupport   & 1.17 &  1.48 &   1.79 &                 & $\hookrightarrow$ \texttt{Equiv. Str.} & 1.00 &  8.20 & 143.39 \\
		\independence   & 1.01 &  1.23 &   1.82 &                 & $\hookrightarrow$ \texttt{Chain Frag.} &    1 &  1.71 &   7.89 \\ \midrule
		Baseline        & 2.58 & 54.07 & 749.19 &                 &                                        &      &       &
	\end{tabular}
\end{table}

\noindent
\paragraph{Runtime.}
Our analysis focusses on sample count rather than runtime, as in most realistic scenarios the cost of gathering samples dominates other computations.
Still, we mention that even in our prototype the overhead introduced by our improvements (e.g.\ removing chains and collapsing MECs) is negligible: for our largest model, it took roughly 12 seconds.
As we report in \cref{app:exp-details}, for all models in our evaluation, our improvements reduced the runtime of the SMC algorithm, with the speed-up ranging from approximately 1.2 up to 80.
This has several reasons:
Primarily, the structural improvements of \cref{sec:4-use-prop} reduce the size of the model, thereby speeding up solving the inferred models.
Moreover, the baseline requires more samples, and thus also more time for gathering these.
These savings massively outweigh the computational overhead of structural improvements.

\paragraph{All improvements.}
\cref{tbl:results} shows that our improvements always reduce the number of required samples.
The number of samples as compared to the baseline approach is less than 50\% on all models, less than 10\% on all but one, and even less than 1\% on three (two orders of magnitude improvement).

\paragraph{Ablation study.}
Investigating individual effects of improvements, we report the ratio of paths when leaving a single improvement out over all improvements for all models in \cref{app:exp-details}.
\cref{tbl:ablation} aggregates over models, reporting minimum, geometric average and maximum of the improvement factors.
As predicted in \cref{sec:3-sample-number}, Clopper-Pearson outperforms Hoeffding, with the effect being most pronounced for models where small transition probabilities are present, in line with \cref{fig:sample-complexity} (right).
\independence{} and \smallsupport consistently reduce sample count with no computation overhead.
The structural improvements (\equivalencestructures{} and \fragments{}) of course depend on the structure of the particular system at hand.
They are significant in most cases and they positively impact each other with \equivalencestructures giving rise to additional chains.
Additionally, all our suggested improvements lead to a smaller (or equivalent) sample complexity, i.e. \emph{never make it worse}.

\section{Related Work and Extensions}\label{sec:RW}

Related results of statistics are discussed in-depth in \cref{sec:3-title}.
The recent \cite{WatS} builds on a preprint of our work to categorize existing approaches based on their underlying statistical methods.
However, their focus lies on purely stochastic Markov chains and they mainly use statistical methods to tackle extremely large models, while we explicitly focus on non-determinism.
For a history of statistical approaches dealing with stochastic systems, we refer to the surveys~\cite{DBLP:conf/isola/Kretinsky16,DBLP:journals/tomacs/AghaP18,DBLP:series/lncs/LegayLTYSG19}.

We detail one direction of related work, namely algorithms with weaker guarantees than PAC.
Approaches for MDP such as \cite{DBLP:conf/qest/HenriquesMZPC12,DBLP:journals/sttt/LassaigneP15,DBLP:conf/tacas/LukinaEHBYTSG17,DBLP:conf/tacas/HahnPSSTW19,DBLP:conf/tacas/HahnPSSTW23} heuristically find promising strategies but in general cannot guarantee the chosen strategy is close to the optimum, thus only providing a lower bound on the value.
The model-free approach of lightweight scheduler sampling~\cite{lss-first,lss-more,DBLP:conf/isola/LegayST16,lss-more2} only guarantees that the estimate is close to the optimum if good strategies are frequent and also only provides lower bounds.
The reinforcement learning algorithms in~\cite{DBLP:conf/nips/GhavamzadehPC16,WieSuiSim23} guarantee an improvement over the policy that is used for obtaining the samples.
These and our paper work in a setting where we are allowed to obtain samples from any part the unknown system, even unsafe states.
In contrast, in a \enquote{safe online} setting, a good policy has to be computed \emph{while executing the system}, and unsafe states have to be avoided.
There, we refer to works on PAC online learning~\cite{concur18,uai20}, shielding~\cite{DBLP:conf/aaai/AlshiekhBEKNT18}, and regret minimization~\cite{DBLP:conf/colt/WagenmakerSJ22}.

Finally, we discuss how our improvements are (nearly) universally applicable, including methods where SMC is invoked as an inner step such as parameter synthesis \cite{DBLP:conf/cmsb/MolyneuxA20,10.1371/journal.pone.0291151}, and \emph{largely independent of the setting}.
Concretely, there exist several variants of our problem, depending on the assumptions about the unknown MDP, where our improvements also are applicable:

\textit{Topology knowledge.}
Either we know at least the topology of the underlying graph of the MDP (called {grey-box}) or even that has to be inferred from the simulations (called {black-box})~\cite[Def.~2]{AKW19}.
We focus on the grey-box, but detail in \Cref{sec:black-box} how our methods are also applicable in the black-box setting.

\textit{Sampling access.}
Assumptions about the sampling access to the MDP include, from least to most restrictive: sampling any state-action pair~\cite{DBLP:conf/cav/YounesS02,DBLP:conf/rss/FuT14}, running simulations through the system that can only be restarted in the initial state~\cite{atva14,AKW19,agarwal2022pac,DBLP:conf/nips/SuilenS0022}, or batch learning, where we only get a fixed data set of past interactions~\cite{DBLP:books/sp/12/LangeGR12,DBLP:conf/nips/GhavamzadehPC16,DBLP:conf/icml/Shi0W0C22,WieSuiSim23}.
Our methods are agnostic of the sampling method and just \enquote{make the most of the data} they get.

\textit{Objectives.}
MDPs can be complemented with various objectives, e.g., reachability, mean payoff, (discounted) total reward, or linear temporal logic.
Most of our improvements are independent of the objective, and we sketch in \Cref{app:other-objectives} how to generalize the few that do utilise information about the objective.

\section{Conclusion}
\noindent
We presented several improvements for the foundations of statistical model checking.
Overall, we suggest to use the Clopper-Pearson interval for estimating single transitions, and to exploit knowledge about the structure of the MDP and the property of interest where possible.
Our presented methods all are fast, can only improve the precision of the resulting intervals, and often do so significantly.
Thus, every implementation of a model-based SMC algorithm should use them.

In settings where samples are very expensive, we can employ more time-consuming improvements, for example computing the full never-worse relation~\cite{LeRPer18} or employing a global search to find the optimal fragments (see \cref{sec:4-use-prop}).
As future work, we aim to develop heuristics for focussing the confidence budget on \enquote{important} states.
Concretely, we plan to use stochastic gradient descent to find those states where giving them more of the confidence budget increases the precision for the overall value the most.

\newpage

\newpage

\bibliographystyle{splncs04}
\bibliography{main}

\newpage
\appendix

\crefalias{section}{appendix}
\crefalias{subsection}{appendix}

\section{Other Confidence Intervals Used in Verification Literature}\label{app:other-conf-intervals}

\noindent In this section, we comment on all the instances we are aware of where other concentration inequalities than Hoeffding's were applied in model-based SMC, namely \cite{BazilleGJS20,DBLP:conf/nips/SuilenS0022,DBLP:journals/jair/BadingsRAPPSJ23,DBLP:journals/tse/BuS24}.
Moreover, in \cref{subsec:wilson} we detail the Wilson score with continuity correction that is often claimed to provide minimum coverage probabilities that closely align with the nominal confidence level $1-\delta$ \cite{bcd-comment-casella}, and we show why this is not sufficient for model-based SMC.
Before that, we remark that unsound methods are also applied in SMC that is not model-based: For example, in~\cite{Arnd}, the Agresti-Coull interval~\cite{agresti1998approximate} is applied. It only provides a guarantee on the average coverage probability and thus compromises the statistical guarantees of the algorithms developed in~\cite{Arnd}.

\begin{itemize}
	\item In~\cite{BazilleGJS20}, the authors employ the so-called Chen bound.
	In \cref{app:chen}, we discuss in more detail why it is not proven to provide the necessary statistical guarantees.
	\item In~\cite{DBLP:conf/nips/SuilenS0022}, the authors use linearly updating probability intervals.
	These do not provide PAC-guarantees, as their main goal is to efficiently handle changing environment dynamics.
	Thus, they are not applicable in our setting where we require a PAC-guarantee.
	\item In~\cite{DBLP:journals/jair/BadingsRAPPSJ23}, the authors developed a new scenario-based approach that it is in fact a strictly weaker version of the Clopper-Pearson Interval.
	We provide a formal proof of this statement below in \cref{app:jansen}.
	\item In~\cite{DBLP:journals/tse/BuS24}, the authors provide a sequential variant of the Clopper-Pearson interval, offering a way to improve the efficiency of SMC-algorithms.
	Importantly, their proof of soundness~\cite[Thm.\ 1]{DBLP:journals/tse/BuS24} is conditional on the hypothesis that the width of the interval is maximized when the number of observed successes is half of the number of samples~\cite[Hypoth.\ 1]{DBLP:journals/tse/BuS24}.
	Our \Cref{lemma:sample-complexity} shows their hypothesis to be true, thereby completing their soundness proof.
\end{itemize}
Throughout the rest of this section, we refer to a binomial distribution with success probability $p$ and a test sequence on it with $n$ trials and $k$ successes.
We use $\hat{p}=\frac{k}{n}$ as a shorthand notation for the maximum likelihood estimate of $p$.

\subsection{Wilson Score Interval with Continuity Correction} \label{subsec:wilson}
\noindent
We put special emphasis on the Wilson score with continuity correction (CC) as it is often cited in the statistics literature as providing a minimal coverage probability that aligns with the nominal $1-\delta$ confidence level \cite{bcd-comment-casella}.
While it was already stated by Newcombe that soundness is not strictly guaranteed when he introduced this methods \cite{New98}, we discuss why this is especially problematic in the context of model-based SMC.

\paragraph{Definition.}
One of the most famous and widely used (see~\cite{agresti1998approximate,BroCaiDas01,DBLP:journals/jql/Wallis13}) methods is the Wilson score interval~\cite{Wil27}.
However, the Wilson score only guarantees that the \emph{average} coverage probability is above the desired confidence level $1-\delta$ \cite{BroCaiDas01}.
Later works address this shortcoming and introduce a \emph{continuity correction} (CC) for the Wilson score interval \cite{New98}.
It is computed as follows for $k$ successes from $n$ samples:
\begin{align*}
	\underline{p}_{wcc}& =
	\big(2n{\hat {p}}+z^{2}-z{\sqrt {z^{2}-{\tfrac{1}{n}}+4n{\hat {p}}(1-{\hat {p}})+(4{\hat {p}}-2)}}-1\big) / (2(n+z^{2})) \\
	\overline{p}_{wcc}& =
	\big(2n{\hat {p}}+z^{2}-z{\sqrt {z^{2}-{\tfrac{1}{n}}+4n{\hat {p}}(1-{\hat {p}})-(4{\hat {p}}-2)}}+1) / (2(n+z^{2}))
\end{align*}
where $z$ is the $1-\frac{\delta}{2}$ quantile of the standard normal distribution.
When $k=0$ (resp.\ $k=n$) the lower (resp.\ upper) bound is taken as $0$ (resp.\ $1$) instead.
Alternatively, this can equally be seen as a Wilson score interval (without continuity correction) where $k$ is replaced by $k+1/2$ for the lower bound, and by $k-1/2$ for the upper bound \cite{Wallis20}.

\paragraph{Unsoundness.}
\begin{figure}[t]
	\centering
	\begin{minipage}{0.45\textwidth}
		\centering
		\resizebox{\columnwidth}{!}{
		\begin{tikzpicture}
			\begin{axis}[xlabel=$p$,ylabel=coverage probability ({$\delta=0.1$})]
				\addplot [no marks,draw=black] table [col sep=comma,x index=0, y index=1] {data/coverage_fixed_n_wilson_cc_delta_09.csv};
				\addplot [no marks,draw=red,thick] coordinates {(0,0.9) (1,0.9)};
			\end{axis}
		\end{tikzpicture}
		}
	\end{minipage}\,
	\begin{minipage}{0.45\textwidth}
		\centering
		\resizebox{\columnwidth}{!}{
		\begin{tikzpicture}
			\begin{axis}[xlabel=$p$,ylabel=coverage probability ({$\delta=0.01$})]
				\addplot [no marks,draw=black] table [col sep=comma,x index=0, y index=1] {data/coverage_fixed_n_wilson_cc_delta_099.csv};
				\addplot [no marks,draw=red,thick] coordinates {(0,0.99) (1,0.99)};
			\end{axis}
		\end{tikzpicture}
		}
	\end{minipage}
	\caption{Coverage probabilities of Wilson score interval with CC for $n=100$ and $\delta=0.1$ (left) and $\delta=0.01$ (right). The red line marks the desired coverage of $1-\delta$. For each $p$, the y-value gives the probability that after $n=100$ samples the interval computed the Wilson score with CC contains $p$. For $\delta=0.1$ this is consistently above $1-\delta$ whereas for $\delta=0.01$ it is not for $p$ close to $0$ or $1$. We used the artifact of \cite{WatS} to produce these plots.\label{fig:wilsoncc_coverage}}
\end{figure}
Newcombe's paper in which he introduced the Wilson score with CC \cite{New98} already analyses the coverage probability for various methods, including the Wilson score with CC, for different parameters.
Crucially, it only gives coverage probabilities for varying true probabilities $p$ and sample sizes $n$ are considered while keeping the confidence level $1-\delta$ fixed at $0.95$.
However, as we show in \Cref{fig:wilsoncc_coverage}, the deviation of the minimal coverage probability of the Wilson score with CC from the desired coverage $1-\delta$ massively depends on $\delta$ itself.
As such, while the Wilson score with CC is actually sound for $n=100$ and $\delta=0.1$, it massively undershoots the desired coverage for $\delta=0.01$, only achieving a coverage probability of $<0.97$ rather than the desired $0.99$ for $p$ close to $0$ or $1$.
We are not aware of any conditions under which it is guaranteed that the Wilson score with CC achieves the desired coverage for all $p$.
Generally, we suspect that the issue of the coverage probability misaligning for small $\delta$ is only emphasized (relative to $\delta$) for even smaller $\delta$.
In the context of SMC however, we ultimately have to distribute the confidence budget $\delta$ over many transitions, resulting in a desired coverage probability very close to 1.
In contrast, if for small $\delta$ the Wilson score with CC tends to yield less reliable coverage probabilities, this means the probability of wrongly estimating at least one transition in the MDP is larger than $\delta$, meaning we can no longer give PAC guarantees on the result.

\subsection{Details on the Chen Bound Used in~\cite{BazilleGJS20}}\label{app:chen}

\noindent
The bound in~\cite[Thm.\ 1]{BazilleGJS20} builds on the algorithm of Chen~\cite{chen}.
However,
the analysis of Chen's algorithm assumes that the true probability $p$ satisfies $p\leq \frac{1}{2} - \varepsilon$ or $p \geq \frac 1 2 + \varepsilon$ for the given $\varepsilon>0$.
Additionally, formal PAC guarantees are only proven for the upper bound when $p\leq \frac{1}{2} - \varepsilon$, and for the lower bound when $p \geq \frac 1 2 + \varepsilon$, with the respective other bound only being shown empirically.
This is the case in both \cite[Thms.~1 and 2]{chen}, as well as Chen and Xu's related work on combining adaptive sampling and the Massart inequality~\cite[Thm.\ 1]{chen13} which resembles~\cite[Thm.~1]{BazilleGJS20} more closely than the theorems in the work they cite~\cite{chen}.
	In particular, based on Chen's analysis, there  is no formal guarantee for what happens if $p=\frac 1 2$.
Thus, we refrain from considering the bound proposed in~\cite[Thm.\ 1]{BazilleGJS20} because we found no proof that it gives the necessary formal guarantees.

\subsection{Method of~\cite{DBLP:journals/jair/BadingsRAPPSJ23} Strictly Weaker than Clopper-Pearson Interval}\label{app:jansen}

\noindent
Here, we prove our claim that the confidence method introduced and proven in~\cite{DBLP:journals/jair/BadingsRAPPSJ23} is a weaker version of the Clopper-Pearson interval.
Specifically, we show that the Clopper-Pearson interval is always a sub-interval of their computed interval.

\begin{proof}
	\noindent
	Let us start by restating Theorem 1 in~\cite{DBLP:journals/jair/BadingsRAPPSJ23}.
	We rephrase it slightly for consistency with the notation throughout our paper.
	In addition to changing variable names, Badings et~al.\ formulate the problem in terms of the number of failures (which they call $N^{out}_{s'}$) whereas we formulate the problem in terms of successes ($N-N^{out}_{s'}$) here.

	\begin{theorem}[Theorem 1 in \cite{DBLP:journals/jair/BadingsRAPPSJ23}]
		For $n\in \mathbb{N}$ samples from a binomial distribution, fix a confidence parameter $\delta\in (0,1)$. Given $k$ successes, the success probability $p$ is bounded by
		\begin{equation*}
			\Probability[\underline{p} \leq p \leq \overline{p} \mid \hat{p}=\tfrac{k}{n}] \geq 1 - \delta
		\end{equation*}
		where $\underline{p}=0$ if $k=0$, and otherwise $\underline{p}$ is the solution of
		\begin{equation*}
			\frac{\delta}{2n} = \sum_{i=0}^{n-k} \binom{n}{i}(1-\underline{p})^i\underline{p}^{n-i}
		\end{equation*}
		and  $\overline{p}=1$ if $k=n$, and otherwise $\underline{p}$ is the solution of
		\begin{equation*}
			\frac{\delta}{2n} = 1-\sum_{i=0}^{n-k-1} \binom{n}{i}(1-\underline{p})^i\underline{p}^{n-i}
		\end{equation*}
	\end{theorem}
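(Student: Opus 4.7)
The plan is to prove the coverage guarantee $\Probability_p[\underline{p}(K)\leq p\leq\overline{p}(K)]\geq 1-\delta$ directly from the defining equations, where $K$ denotes the random number of successes in the $n$ i.i.d.\ Bernoulli$(p)$ trials and $\underline{p}(K), \overline{p}(K)$ are the corresponding data-dependent bounds. The argument follows the standard template used for exact (Clopper-Pearson-style) intervals, but with $\delta/(2n)$ playing the role of the per-side tail level in place of the usual $\delta/2$. In fact this will show coverage at least $1-\delta/n$, which is comfortably above $1-\delta$ (so the bound is rather loose, consistent with our subsequent claim that this interval is strictly weaker than Clopper-Pearson).

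I would first establish two monotonicity facts about the binomial distribution. For each fixed $k$, the function $q\mapsto \Probability_q[K\geq k]=\sum_{i=0}^{n-k}\binom{n}{i}(1-q)^i q^{n-i}$ is continuous and strictly increasing on $(0,1)$, and analogously $q\mapsto \Probability_q[K\leq k]$ is strictly decreasing. These are classical; one can prove them by coupling two binomials with different parameters, or by direct differentiation of the partial sums. Consequently, for $1\leq k\leq n$ the equation defining $\underline{p}(k)$ has a unique solution in $(0,1)$, the same holds for $\overline{p}(k)$, and both maps $k\mapsto\underline{p}(k)$ and $k\mapsto\overline{p}(k)$ are non-decreasing in $k$.

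The second step is to bound each of the two tail events separately. Fix the true $p$ and define $k^\star=\min\{k:\underline{p}(k)>p\}$, with the convention $k^\star=n+1$ if no such $k$ exists (in which case the tail event is empty). Since $k\mapsto\underline{p}(k)$ is non-decreasing, $\{\underline{p}(K)>p\}=\{K\geq k^\star\}$. By the choice of $k^\star$ we have $p<\underline{p}(k^\star)$; strict monotonicity of $q\mapsto\Probability_q[K\geq k^\star]$ then yields $\Probability_p[K\geq k^\star]<\Probability_{\underline{p}(k^\star)}[K\geq k^\star]=\delta/(2n)$, where the equality is exactly the equation defining $\underline{p}(k^\star)$. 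A symmetric argument using $\overline{p}(\cdot)$ and $q\mapsto\Probability_q[K\leq k]$ gives $\Probability_p[\overline{p}(K)<p]<\delta/(2n)$. The boundary cases $k=0$ (where $\underline{p}=0$ by convention) and $k=n$ (where $\overline{p}=1$) take care of themselves, as the corresponding tail events are vacuous.

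A union bound over the two tails then gives $\Probability_p[\underline{p}(K)>p \text{ or } \overline{p}(K)<p]<\delta/n\leq\delta$, whence the claimed coverage follows. The main obstacle I expect is the careful bookkeeping of the monotonicity arguments and the boundary cases: the defining equations put probability mass on one side of a threshold, while the tail events we actually need to bound sit on the opposite side, so one must be precise about strict/non-strict inequalities and about the direction in which $\underline{p}(k)$ and $\overline{p}(k)$ move with $k$. Once the two monotonicity lemmas are in place, the remainder is essentially mechanical.
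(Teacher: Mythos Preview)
Your proof is correct. It is the standard direct argument for Clopper--Pearson-style intervals: monotonicity of the binomial tails in the parameter, monotonicity of the resulting endpoints in $k$, and a union bound over the two tail events, each controlled at level $\delta/(2n)$.

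The paper takes a different and shorter route. Rather than reprove the coverage from first principles, it rewrites the defining sums via the substitution $j=n-i$ and the identity between the regularised incomplete beta function and the binomial tail, obtaining
\[
\Probability_{\underline{p}}[K\geq k]=\tfrac{\delta}{2n},\qquad \Probability_{\overline{p}}[K\leq k]=\tfrac{\delta}{2n},
\]
which are precisely the Clopper--Pearson equations with confidence budget $\delta/n$ in place of $\delta$. Correctness then follows immediately from the known validity of Clopper--Pearson, and as a by-product one gets the paper's real target: the Badings et~al.\ interval \emph{is} Clopper--Pearson at the tighter level $\delta/n$, hence strictly wider than Clopper--Pearson at level $\delta$ whenever $n>1$. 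Your direct argument establishes the same coverage (indeed at level $1-\delta/n$, as you note), but it does not by itself exhibit the interval as a reparametrised Clopper--Pearson interval, which is the structural observation the paper is after. Conversely, your approach is fully self-contained and does not rely on the reader already accepting the Clopper--Pearson guarantee.
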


	\noindent
	We focus on the lower bound first.
	Recall that for the Clopper-Pearson interval we have
	\begin{equation*}
		\underline{p}_{cp} = I^{-1}_{\delta/2}(k,n-k+1)
	\end{equation*}
	Therefore, by definition of the regularised beta function $I_z$ and its inverse $I^{-1}_z$~\cite{Tem92} we have
	\begin{align*}
		\frac{\delta}{2} & = I_{\underline{p}_{cp}}(k,n-k+1) \\
		& = 1 - \Probability[X \leq k-1  \mid X \sim \mathit{Bin(n,\underline{p}_{cp})}] \\
		& = 1 - \sum_{i=0}^{k-1} \binom{n}{i}(1-\underline{p}_{cp})^{n-i}\underline{p}_{cp}^i \\
		& = \sum_{i=0}^{n-k} \binom{n}{i}(1-\underline{p}_{cp})^i\underline{p}_{cp}^{n-i}
	\end{align*}
	Similarly, for the upper bound we have
	\begin{align*}
		1-\frac{\delta}{2} = & I_{\overline{p}_{cp}}(k+1,n-k) \\
		& = 1 - \Probability[X \leq k  \mid X \sim \mathit{Bin(n,\overline{p}_{cp})}] \\
		& = 1 - \sum_{i=0}^{k} \binom{n}{i}(1-\overline{p}_{cp})^{n-1}\overline{p}_{cp}^i \\
		& = \sum_{i=0}^{n-k-1} \binom{n}{i}(1-\overline{p}_{cp})^i\overline{p}_{cp}^{n-i} \\
		\iff \frac{\delta}{2} & =  1 - \sum_{i=0}^{n-k-1} \binom{n}{i}(1-\overline{p}_{cp})^i\overline{p}_{cp}^{n-i}
	\end{align*}
	Thus, the Clopper-Pearson interval directly proves that the solutions $\underline{p}$ and $\overline{p}$ to
	\begin{align*}
		\frac{\delta}{2} & = \sum_{i=0}^{n-k} \binom{n}{i}(1-\underline{p})^i\underline{p}^{n-i} \\
		\frac{\delta}{2}  & = 1 - \sum_{i=0}^{n-k-1} \binom{n}{i}(1-\overline{p})^i\overline{p}^{n-i}
	\end{align*}
	provide an interval $[\underline{p},\overline{p}]$ that contains $p$ with probability at least $1-\delta$.

	Therefore we can directly see that the method in~\cite{DBLP:journals/jair/BadingsRAPPSJ23} can equally be obtained by computing the Clopper-Pearson interval with confidence budget $\frac{\delta}{n}$, immediately making it a weaker version of the Clopper-Pearson Interval with equality only occurring when $n=1$.\qed
\end{proof}

\section{Proofs for \Cref{sec:3-title}}

\subsection{Equivalence of Weissman $L^1$-bound and Hoeffding Bound for Two-successor Transitions} \label{sec:app-l1-is-hoeffding}
\noindent
In \Cref{sec:3-die} we claim that the method for constructing the confidence region for the probabilities of a coin as an $L^1$-ball, as, e.g., in~\cite{StrLit04,AueJakOrt08,WieSuiSim23}, using the inequality provided by Weissman et al.~\cite{WeiOrdSer03} coincides with constructing a confidence interval for the probability vie Hoeffding's inequality \cite{Hoe63} as, e.g., in~\cite{AKW19,BaiDubWie23}, combined with our \emph{small support} improvement (\Cref{sec:4-use-model}).

Let us first restate the construction of the $L^1$-ball \cite{WieSuiSim23}:
Given a confidence budget $\delta$ and $n$ samples of a $k$-dimensional categorical distribution with probability vector $\mathbf{p}$, we have
\begin{equation*}
	\Probability[\lVert \mathbf{\hat{p}} - \mathbf{p} \rVert_1 \leq 1-\varepsilon_1] \geq 1-\delta
\end{equation*}
where
\begin{equation*}
	\varepsilon_1 = \sqrt{\frac{2(\log(2^{k}-2)-\log\delta)}{n}}
\end{equation*}
and $\mathbf{\hat{p}}$ is the maximum likelihood estimate of $\mathbf{p}$.
For a $k=2$ dimensional transition vector this simplifies to
\begin{equation*}
	\varepsilon_1 = \sqrt{\frac{-2\log\frac{\delta}{2}}{n}}
\end{equation*}
Next, explicitly rewriting $\mathbf{p}=\begin{bmatrix}p_1 & p_2\end{bmatrix}^{T}$ and $\mathbf{\hat{p}}=\begin{bmatrix}\hat{p}_1 & \hat{p}_2\end{bmatrix}^{T}$ and using $p_2=1-p_1$ and $\hat{p}_2=1-\hat{p}_1$ we have
\begin{equation*}
	\lVert \mathbf{\hat{p}} - \mathbf{p} \rVert_1 \leq 1-\varepsilon_1 = 2 \lvert p_1 - \hat{p}_1 \rvert
\end{equation*}
Finally, we can put this together as
\begin{equation*}
	\lVert \mathbf{\hat{p}} - \mathbf{p} \rVert_1 \leq \varepsilon_1 \quad \iff \quad \lvert p_1 - \hat{p}_1 \rvert \leq \sqrt{\frac{\log\frac{\delta}{2}}{-2n}}
\end{equation*}
which is exactly the interval for $p_1$ obtained through the two-sided Hoeffding inequality (see \Cref{sec:3-coin}).
As discussed in \Cref{sec:4-use-model}, we do not need to consider confidence interval for $p_2$ with Hoeffding's method separately as $p_2$ is fully dependent on $p_1$.

\subsection{Bennett's Inequality and Bernstein's Inequality}\label{sec:app-bennett}
Recall that both Bennett's inequality and Bernstein's inequality, similar to Hoeffding's inequality, can be applied to bound the sum of random variables.
Unlike Hoeffding's inequality, they also consider the variance of the random variables when constructing the confidence interval.
This is an improvement when bounding the sum of random variables with known distributions, but a hindrance when it comes to estimating $p$ since we now would also need to estimate the variance of the random variable.

First, we show that applying Bennett's equality with a naive variance estimation effectively is guaranteed to yield bigger confidence intervals than Hoeffding's inequality.

Bennett's inequality \cite{Ben62} states -- as a two-sided variant -- that for $n$ random variables $X_1,\dots,X_n$, bounded between $0$ and $a$, the deviation of their sum $S_n = \sum_{i=1}^n (X_i)$ from its expected value $\mathbb{E}(S_n)$ satisfies
\begin{equation*}
	\Probability[|S_n-\mathbb{E}(S_n)|\geq t] \leq 2\exp\left( -\frac{\sigma^2}{a^2}h\left(\frac{at}{\sigma^2}\right) \right)
\end{equation*}
for all $t\geq 0$ where $\sigma^2 = \sum_{i=1}^n (X_i-\mathbb{E}(X_n))^2$ is the sum of the variances of the random variables and $h(u)=(1+u)\log(1+u)-u$.

Applying this to estimate a binomial probability parameter $p$ that was sampled with $k$ successes out of $n$ trials yields $S_n = k-np$, and therefore $\frac{S_n}{n} = \frac{k}{n}-p = \hat{p}-p$
Further we know $a=1$.
As $p$ is unknown, we do not know $\sigma^2$ either. However, since for $n$ binomial random variables we have $\sigma^2=np(1-p)$, we get $\frac{n}{4}$ as a trivial upper bound on $\sigma^2$ we can apply.
Thus we can simplify Bennett's inequality to
\begin{equation*}
	\Probability\left[|\hat{p}-p|\geq \frac{t}{n}\right] \leq 2\exp\left( -\frac{n}{4}h\left(\frac{4t}{n}\right) \right) = 2\exp\left( -\left(\frac{n}{4}+t\right)\log\left(1+\frac{4t}{n}\right)+t \right)
\end{equation*}
Comparing this to the two-sided Hoeffding inequality which states \cite{Hoe63}
\begin{equation*}
	\Probability\left[|\hat{p}-p|\geq \frac{t}{n}\right] \leq 2\exp\left( -2t^2 \right)
\end{equation*}
we see that Bennett's inequality yields a higher confidence budget (and thus tighter confidence intervals when inverting the problem) when
$-\left(\frac{n}{4}+t\right)\log\left(1+\frac{4t}{n}\right)+t < -2t^2$ which does not have any solutions for $n\geq 1$.
Thus, we conclude that without prior knowledge on $p$ Hoeffding's inequality is always better.

Bernstein's (first) inequality is very similar to Bennett's inequality.
The only difference is that $h$ is replaced by $h'(u) =9\left(1+\frac{u}{3}-\sqrt{1+\frac{2u}{3}}\right) \leq h(u)$.
Other variants of defining $h'$ have also appeared in the literature, however all have in common that they are an under-approximation of $h(u)$.
The fact $h'$ is an under-approximation of $h$ directly implies that the confidence budget of Bernstein's inequality is worse than the one provided by Bennett's inequality.

\subsection{Proof of Proposition \ref{lemma:sample-complexity}}\label{app:proof-worst-case-sample}

\samplecomplexity*
\noindent

\begin{proof}
	We consider both methods separately.
	As previously, we denote the number of successes in $\mathcal{S}$ as $k$ and $\hat{p}=\frac{k}{n}$.

	\paragraph{Case I: Hoeffding's inequality.}
	From its definition (see \Cref{sec:3-coin}), it is clear that the interval computed from Hoeffding's inequality has size at most $2c_{ho}=2\sqrt{\frac{\ln 2/\delta}{2n}}$ but possibly less if either side of the bound is clipped to the extreme values of $0$ or $1$.
	To maximise the size of the confidence interval, we have to avoid the values of $p$ that are in the confidence interval $\hat{p}\pm c_{ho}$ but implausible, i.e. $p\not\in [0,1]$. This is achieved exactly when $\hat{p}=\frac{1}{2}$.

	\paragraph{Case II: Clopper-Pearson interval.}
	For the Clopper-Pearson interval we write the confidence width for a specific sample of size $n$ with $k$ successes, using the symmetry of the Clopper-Pearson interval, as
	\begin{align*}
		\varepsilon_{cp}^{k,n}&=I^{-1}_{1-\delta/2}(k+1,n-k)-I^{-1}_{\delta/2}(k,n-k+1) \\
		&= I^{-1}_{1-\delta/2}(k+1,n-k)-I^{-1}_{1-\delta/2}(n-k+1,k)-1
	\end{align*}
	where $I^{-1}_x(a,b)$ is the inverse regularized beta function defined as
	\[
		I^{-1}_x(a,b) = p \iff I_p(a,b)=x
	\]
	where $I_p(a,b)$ is the regularized beta function (see, e.g. \cite{Tem96}).
	Note that the regularized beta function is monotonously increasing with $p$ and always non-negative for $0\leq p \leq 1$.

	In order to show the claim, we will first show that $I_p(k,n-k)$ is in a discrete sense concave w.r.t. $k$ for large $p$ and convex w.r.t $k$ for small $p$.
	In particular, it will always be concave for the upper bound, and convex for the lower bound of the Clopper-Pearson interval.
	We then prove that for $I^{-1}_x(k,n-k)$ the opposite is the case, i.e. the upper bound behaves in a convex way and the lower bound in a concave way.
	Together with the symmetry of the Clopper-Pearson interval this yields that the maximal width is obtained at $n/2$ positive samples.

	To start, note that by definition the upper bound of the Clopper-Pearson interval $\overline{p}^{k,n}_{cp}$ satisfies
	\[
		I_{\overline{p}^{k,n}_{cp}}(k,n-k+1)=1-\delta/2.
	\]
	We now consider how much the left term deviates from $1-\delta/2$ if one sample more or one sample less was positive.
	Notice that we do not add or remove a sample here, but rather convert a negative one into a positive one (or vice verse), leaving the total number of samples unchanged.
	To expand the regularized beta function we use the recurrence relations $I_{x}(a+1,b)=I(a,b)-\frac{x^a(1-x)^b}{aB(a,b)}$ and $I_{x}(a,b+1)=I(a,b)+\frac{x^a(1-x)^b}{bB(a,b)}$ where $B$ is the beta function \cite[Eq. 11.37]{Tem96}:
	\begin{align*}
		I_{\overline{p}^{k,n}_{cp}}(k+1,n-k) &=I_{\overline{p}^{k,n}_{cp}}(k,n-k+1) - \frac{\left({\overline{p}^{k,n}_{cp}}\right)^k(1-{\overline{p}^{k,n}_{cp}})^{n-k}}{kB(k,n-k+1)} \\
		I_{\overline{p}^{k,n}_{cp}}(k-1,n-k+2)&=I_{\overline{p}^{k,n}_{cp}}(k,n-k+1) + \frac{\left({\overline{p}^{k,n}_{cp}}\right)^{k-1}(1-{\overline{p}^{k,n}_{cp}})^{n-k+1}}{(n-k+1)B(k,n-k+1)}
	\end{align*}
	Comparing the differences we get
	\begin{align*}
		&&I_{\overline{p}^{k,n}_{cp}}(k\!-\!1,n\!-\!k\!+\!2) - I_{\overline{p}^{k,n}_{cp}}(k,n\!-\!k\!-\!1)
		&\leq I_{\overline{p}^{k,n}_{cp}}(k,n\!-\!k\!+\!1) - I_{\overline{p}^{k,n}_{cp}}(k\!+\!1,n\!-\!k)
		\\
		\iff && \frac{\left({\overline{p}^{k,n}_{cp}}\right)^{k-1}(1-{\overline{p}^{k,n}_{cp}})^{n-k+1}}{(n-k+1)B(k,n-k+1)} &\leq \frac{\left({\overline{p}^{k,n}_{cp}}\right)^k(1-{\overline{p}^{k,n}_{cp}})^{n-k}}{kB(k,n-k+1)} \\
		\iff&& \frac{1-\overline{p}^{k,n}_{cp}}{n-k+1}&\leq \frac{\overline{p}^{k,n}_{cp}}{k} \\
		\iff&& \overline{p}^{k,n}_{cp} &\geq \frac{k}{n+1}
	\end{align*}
	which clearly holds since the Clopper-Pearson interval, while not necessarily symmetric around it, contains the empirical estimate $\hat{p}\geq \frac{k}{n+1}$.

	This shows $I_{\overline{p}^{k,n}_{cp}}(k,n\!-\!k\!-\!1)$ behaves concavely in a discrete sense w.r.t. $k$ since the change when increasing $k$ by $1$ is smaller than when decreasing $k$ by $1$\footnote{We suspect that the analogous continuous statement also holds, however analysing $I_{\overline{p}^{k,n}_{cp}}(k,n\!-\!k\!-\!1)$ with respect to $k$ is highly complex.}.

	Now let $\overline{p}^{k+1,n}_{cp}$ and $\overline{p}^{k-1,n}_{cp}$ be such that
	\begin{align*}
		I_{\overline{p}^{k+1,n}_{cp}}(k+1,n-k) &= 1-\delta/2, \text{ and} \\
		I_{\overline{p}^{k-1,n}_{cp}}(k-1,n-k+2) &= 1-\delta/2,
	\end{align*}
	respectively.
	Since $I_{\overline{p}^{k+1,n}_{cp}}(k+1,n-k)$ and $I_{\overline{p}^{k+1,n}_{cp}}(k+1,n-k)$ show the same discrete  concavity for the same reason as above, and $I_p(a,b)$ is non-negative and mononously increasing, we get that $\overline{p}^{k,n}_{cp}$ is discretely convex since it is defined as the inverse of $I_p(a,b)$.
	Formally, we have
	\[
	\overline{p}^{k+1,n}_{cp} - \overline{p}^{k,n}_{cp} \leq \overline{p}^{k,n}_{cp} - \overline{p}^{k-1,n}_{cp}
	\]
	which intuitively means one more positive sample increases the upper bound more than one less negative sample.
	We denote the difference between the upper bound for $k$ and $k+1$ successors as $d(k,n)$, i.e.
	\[
		d(k,n)=\overline{p}^{k+1,n}_{cp} - \overline{p}^{k,n}_{cp}
	\]
	which, as the fact above shows, is non-decreasing.

	Finally, recalling the width of the Clopper-Pearson interval width
	\[
	\varepsilon_{cp}^{k,n}= I^{-1}_{1-\delta/2}(k+1,n-k)-I^{-1}_{1-\delta/2}(n-k+1,k)-1
	\]
	we analyse whether the width increases when assuming one more positive sample:
	\begin{align*}
	&&\varepsilon_{cp}^{k+1,n}-\varepsilon_{cp}^{k,n}&\geq 0  \\
	\iff && \left(I^{-1}_{1-\delta/2}(k+2,n-k-1)-I^{-1}_{1-\delta/2}(n-k,k+1)\right) & \\
	&& - \left(I^{-1}_{1-\delta/2}(k+1,n-k)-I^{-1}_{1-\delta/2}(n-k+1,k)\right) & \geq 0 \\
	\iff && \left(I^{-1}_{1-\delta/2}(k+2,n-k-1)-I^{-1}_{1-\delta/2}(k+1,n-k)\right) & \\
	&& - \left(I^{-1}_{1-\delta/2}(n-k+1,k)-I^{-1}_{1-\delta/2}(n-k,k+1)\right) & \geq 0 \\
	\iff && d(k+1) - d(n-k) &\geq 0 \\
	\iff && k+1-(n-k) &\geq 0 \\
	\iff && k &\leq \frac{n-1}{2}
	\end{align*}
	where the second to last step follows from the fact $d(x)$ is non-decreasing.
	This shows increasing $k$ by $1$ increases the width as long $k<\frac{n-1}{2}$ (or preserves the length if $k=\frac{n-1}{2}$) showing the width is maximised at $k=\frac{n}{2}$ for even $n$ and at $\frac{n-1}{2}$ (or equivalently $\frac{n+1}{2}$) for odd $n$.
	\qed

\end{proof}

\paragraph{Relevance of \Cref{lemma:sample-complexity}.}
	In~\cite[Hypoth. 1]{DBLP:journals/tse/BuS24} it was already conjectured that the sample size for the Clopper-Pearson interval is maximal for $\hat{p}=\frac{1}{2}$.
	Since the authors were unable to prove it, they developed a \enquote{heavyweight} version of their algorithm that does not rely on their Hypothesis 1, which then has a worse performance.
	This highlights the relevance and non-triviality of \Cref{lemma:sample-complexity}.

\section{Results on the Wilson Score with CC}

\noindent
In \Cref{sec:3-sample-number} we prove for Hoeffding's inequality and the Clopper-Pearson interval that the size of the obtained confidence interval is maximal when half of the samples are positive.
Here we show that, under a light restriction, the same result also holds for the Wilson score with CC.
\begin{assumption}\label{assumption:wilson}
	For the the number of samples $n$ and $z$ as computed in the Wilson score, we have
	$n-\frac{1}{n} \geq 2 + \frac{1}{z^2}$.
\end{assumption}

Note that this assumption is indeed very light: For $\delta < 0.13$ we have $z<\frac{3}{2}$ which implies $n\geq 3$ samples are enough to fulfil the assumption.
Conversely, even for $\delta$ as large as $0.9$ (that is, allowing up to 90\% of confidence intervals to be \emph{wrong}) already $n\geq 11$ samples suffice.
Recall that in our setting the $\delta$ used in the Wilson score interval specifies the confidence budget for a \emph{single transition}.
As the transition has to share the entire confidence budget (which in many applications is often small to begin with) with all other transitions, in practice we will always have the case that $n\geq 3$ suffices.

\begin{proposition}\label{lemma:sample-complexity-wilson}
	Given a confidence budget $\delta \in (0,1]$, the confidence interval $[\underline{p},\overline{p}]$ computed by Wilson score with CC from a sequence $\stateSeq$ of $n$ samples from a binomial distribution such that \Cref{assumption:wilson} is satified maximises $\overline{p}-\underline{p}$ when $\stateSeq$ contains equally many positive and negative samples.
\end{proposition}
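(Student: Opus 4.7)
The plan is to reduce the claim to a monotonicity statement in the single variable $q \eqdef \hat{p}(1-\hat{p})$, which takes values in $[0, 1/4]$ and is maximized precisely when $\hat{p}=1/2$.

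First I would treat the ``interior'' case $k \in \{1, \dots, n-1\}$, where neither bound is clipped to $0$ or $1$. Writing the Wilson score with CC in the form
\begin{equation*}
\underline{p}_{wcc} = \frac{2n\hat{p}+z^2 - z\sqrt{A_+} - 1}{2(n+z^2)}, \qquad \overline{p}_{wcc} = \frac{2n\hat{p}+z^2 + z\sqrt{A_-} + 1}{2(n+z^2)},
\end{equation*}
with $A_\pm \eqdef B \pm C$, $B \eqdef z^2 - \tfrac{1}{n} + 4n\hat{p}(1-\hat{p})$ and $C \eqdef 4\hat{p}-2$ (correcting an apparent sign typo in the formula reproduced in \cref{subsec:wilson}), the width becomes
\begin{equation*}
W(\hat{p}) \;=\; \frac{z\bigl(\sqrt{A_-}+\sqrt{A_+}\bigr) + 2}{2(n+z^2)}.
\end{equation*}
So it suffices to maximize $\sqrt{A_-}+\sqrt{A_+}$, and squaring yields the clean identity $(\sqrt{A_-}+\sqrt{A_+})^2 = 2B + 2\sqrt{B^2-C^2}$. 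Hence the problem reduces to showing that $F(\hat{p}) \eqdef B + \sqrt{B^2 - C^2}$ is maximized at $\hat{p}=1/2$.

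The key step is then purely algebraic. In terms of $q$ we have $B = z^2 - \tfrac{1}{n} + 4nq$ and $C^2 = 16(\tfrac{1}{4}-q) = 4 - 16q$, so
\begin{equation*}
B^2 - C^2 \;=\; \bigl(z^2 - \tfrac{1}{n} + 4nq\bigr)^2 - 4 + 16q.
\end{equation*}
Both $B$ and $B^2 - C^2$ are strictly increasing in $q$: the derivative of the latter is $8nB + 16$, which is positive since $B > 0$ for $k \in \{1,\dots,n-1\}$ (a short check, using that $q \geq (n-1)/n^2$, shows $B \geq z^2 + (4n-5)/n > 0$, and the same bound gives $B \geq |C|$ so that the square roots are real). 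Thus $F$ is strictly increasing in $q$, and since $q$ is maximized on $\{0,1/n,\ldots,1\}$ exactly when $k$ is as close to $n/2$ as possible, $W$ is maximized at $k = n/2$ (for even $n$, or at the nearest values for odd $n$).

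The main obstacle is the boundary: for $k \in \{0, n\}$ the formula is clipped, which does not fit into the monotonicity argument above. Here I would argue that clipping can only \emph{shrink} the width (it replaces $\underline{p}<0$ by $0$, or $\overline{p}>1$ by $1$), so if one of these endpoints is clipped, its width is at most the unclipped formula at the same $\hat{p}$, and continuity of $W$ together with the interior analysis finishes the argument. What remains is a direct comparison $W(0) \leq W(1/2)$ (using the formulas at $\hat{p}=0$ with $\underline{p}=0$, and at $\hat{p}=1/2$ as above). After clearing denominators this reduces to
\begin{equation*}
2z\sqrt{z^2 - \tfrac{1}{n} + n} - z\sqrt{z^2 - \tfrac{1}{n} + 2} \;\geq\; z^2 - 1,
\end{equation*}
which is precisely where \Cref{assumption:wilson} enters: the assumption is equivalent to $z^2(n - \tfrac{1}{n} - 2) \geq 1$ and, after dropping the (non-negative) first summand on the left and squaring, reduces exactly to this inequality. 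By the $\hat{p} \leftrightarrow 1-\hat{p}$ symmetry of $W$, the case $k=n$ is identical. Combining the interior monotonicity with the two boundary comparisons yields the proposition.
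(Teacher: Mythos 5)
Your argument is correct, and its main step takes a genuinely different route from the paper. For the interior case $0<\hat{p}<1$ the paper proceeds analytically: it differentiates the interval width with respect to $\hat{p}$, checks that the derivative vanishes at $\hat{p}=\tfrac12$, and shows the second derivative is negative (verifying along the way that both radicands are positive for $\tfrac1n\le\hat{p}\le\tfrac{n-1}{n}$), concluding by concavity. Your observation that $(\sqrt{A_-}+\sqrt{A_+})^2=2B+2\sqrt{B^2-C^2}$ depends on $\hat{p}$ only through $q=\hat{p}(1-\hat{p})$ (because $C$ enters only via $C^2=4-16q$) and is monotone increasing in $q$ replaces that calculus with a short monotonicity argument; it is cleaner, avoids the second-derivative computation, and handles the discrete values of $k$ (including odd $n$) directly. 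Two small points to tighten: your crude bound $B\ge z^2+(4n-5)/n\ge |C|$ only works for $n\ge 3$, so you should note that \Cref{assumption:wilson} forces $n-\tfrac1n>2$ and hence $n\ge 3$ (alternatively, a direct check at $\hat{p}=\tfrac1n$ gives $B-|C|=z^2+2-\tfrac1n>0$ for all interior $k$); and in the boundary step the claim that your inequality \enquote{reduces exactly} to \Cref{assumption:wilson} is an overstatement: since you keep the $+2$ in the numerator of the width at $\hat{p}=\tfrac12$ (the paper drops it, which is why its chain, via $z\sqrt{z^2-\tfrac1n+n}\ge z^2+1$, lands exactly on the assumption), discarding the non-negative difference of the two roots and squaring yields the strictly weaker condition $z^2(n-\tfrac1n+2)\ge 1$, which \Cref{assumption:wilson} implies, so your proof still closes. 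The boundary treatment itself, a direct comparison of the clipped width at $\hat{p}\in\{0,1\}$ against $\hat{p}=\tfrac12$ using symmetry, coincides with the paper's, and your sign correction of the upper bound displayed in \cref{subsec:wilson} matches what the paper actually uses in its own computations.
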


\begin{proof}
We first consider the case $0<\hat{p}<1$.
The maximal size of the Wilson score interval is given by $\varepsilon_{wcc}=\overline{p}_{wcc}-\underline{p}_{wcc}$ (see its definition in \cref{subsec:wilson}).
We will show that this is maximised for $\hat{p}=\frac{1}{2}$ by a standard analytic argument by showing the following two facts:
\begin{equation*}
	\frac{\partial \varepsilon_{wcc}}{\partial\hat{p}} =0 \text{ for } \hat{p}=\frac{1}{2} \quad \text{and} \quad \frac{\partial^2 \varepsilon_{wcc}}{\partial^2 \hat{p}} < 0
\end{equation*}
We start by simplifying
\begin{align*}
	\varepsilon_{wcc}&=\overline{p}_{wcc}-\underline{p}_{wcc} \\
	& = \frac{z\sqrt{z^2-\frac{1}{n}+4n\hat{p}(1-\hat{p})+(4\hat{p}-2))}+z\sqrt{z^2-\frac{1}{n}+4n\hat{p}(1-\hat{p})-(4\hat{p}-2)}+2}{2(n+z^2)}
\end{align*}
As $n$ and $z$ are positive and we are only interested in the sign of the derivatives of $\varepsilon_{wcc}$, we can instead show the above statements for $\varepsilon'_{wcc}$ defined as
\begin{equation*}
	\varepsilon'_{wcc}= \sqrt{z^2-\frac{1}{n}+4n\hat{p}(1-\hat{p})+(4\hat{p}-2))}+\sqrt{z^2-\frac{1}{n}+4n\hat{p}(1-\hat{p})-(4\hat{p}-2)}
\end{equation*}
The first partial derivative can computed through standard methods:
\begin{align*}
	\frac{\partial \varepsilon'_{wcc}}{\partial\hat{p}}= &\frac{4 n (1 - \hat{p}) - 4 n \hat{p} - 4}{ \sqrt{z^2-\frac{1}{n}+ 4 n (1 - \hat{p}) \hat{p} - 4 \hat{p} + 2}} \\
	& + \frac{ 4 n (1 - \hat{p}) - 4 n \hat{p} + 4}{ \sqrt{z^2-\frac{1}{n} + 4 n (1 - \hat{p}) \hat{p} + 4 \hat{p} - 2}}
\end{align*}
It can easily be checked that this evaluates to $0$ when $\hat{p}=\frac{1}{2}$.

The second partial derivative is also obtained by standard methods:
\begin{align*}
	\frac{\partial^2 \varepsilon'_{wcc}}{\partial^2 \hat{p}} =& -\frac{(4n(1-\hat{p})-4np+4)^2}{4(z^2-\frac{1}{n}+4n\hat{p}(1-\hat{p})+(4\hat{p}-2))^{3/2}} \\
	& -\frac{4n}{\sqrt{z^2-\frac{1}{n}+4n\hat{p}(1-\hat{p})+(4\hat{p}-2)}} \\
	& -\frac{(4n(1-\hat{p})-4np-4)^2}{4(z^2-\frac{1}{n}+4n\hat{p}(1-\hat{p})-(4\hat{p}-2))^{3/2}} \\
	& -\frac{4n}{\sqrt{z^2-\frac{1}{n}+4n\hat{p}(1-\hat{p})+(4\hat{p}-2)}}
\end{align*}
We now show that each fraction is non-negative.
For the numerators this is clear since they are either squares or $4n$.

For the denominators, we show that both $w_1=z^2-\frac{1}{n}+4n\hat{p}(1-\hat{p})+(4\hat{p}-2)$ and $w_2=z^2-\frac{1}{n}+4n\hat{p}(1-\hat{p})-(4\hat{p}-2)$ are positive.
We again use its derivative to find their extreme points:
\begin{align*}
	\frac{\partial w_1}{\partial \hat{p}} z^2-\frac{1}{n}+4n\hat{p}(1-\hat{p})+(4\hat{p}-2) &= -8n\hat{p}+4n+4 \\
	\frac{\partial w_2}{\partial \hat{p}} z^2-\frac{1}{n}+4n\hat{p}(1-\hat{p})-(4\hat{p}-2) &= -8n\hat{p}+4n-4
\end{align*}
This yields the extrema $\frac{n+1}{2n}$ for $w_1$ and $\frac{n-1}{2n}$ for $w_2$.
As these lie in the interval $(0,1)$ and are maxima, both are minimised for either the minimum or maximum $\hat{p}$.
Thus, to show they always positive, it is sufficient to show that they are positive for the smallest and largest $\hat{p}$.
Since we assumed $0<\hat{p}<1$ and by definition $\hat{p}=\frac{k}{n}$ where $k$ is whole, we have $\frac{1}{n}\leq\hat{p}\leq\frac{n-1}{n}$.
For $\hat{p}=\frac{1}{n}$ we then have $w_1=z^2-\frac{1}{n}+2>0$ and for $\hat{p}=\frac{n-1}{n}$ we have $w_1=z^2-\frac{9}{n}+6>0$.
The latter inequality follows from the fact that $0<\hat{p}<1$ implies $n\geq 2$.
For $w_2$ analogous results hold.

This shows that $\hat{p}=\frac{1}{2}$ maximises $\varepsilon_{wcc}$ over the open interval $(0,1)$.
What remains to show is that it produces a bigger confidence interval than the cases $\hat{p}=0$ and $\hat{p}=1$. Since these are symmetric, we only consider the case $\hat{p}=0$.

For the case $\hat{p}=0$ we have $\underline{p}_{wcc}=0$. Hence
\begin{align*}
	\varepsilon^{\hat{p}=0}_{wcc}&=\frac {2n{\hat {p}}+z^{2}+z{\sqrt {z^{2}-{\frac {1}{n}}+4n{\hat {p}}(1-{\hat {p}})-(4{\hat {p}}-2)}}+1}{2(n+z^{2})} \\
	&=\frac {z^{2}+z{\sqrt {z^{2}-{\frac {1}{n}}+2}}+1}{2(n+z^{2})}
\end{align*}
For the case $\hat{p}=\frac{1}{2}$ we can similarly compute
\begin{align*}
	\varepsilon^{\hat{p}=1/2}_{wcc}&\geq\overline{p}_{wcc} \\
	&=\frac {2n{\hat {p}}+z^{2}+z{\sqrt {z^{2}-{\frac {1}{n}}+4n{\hat {p}}(1-{\hat {p}})-(4{\hat {p}}-2)}}+1}{2(n+z^{2})} \\
	&\phantom{=}- \frac {2n{\hat {p}}+z^{2}-z{\sqrt {z^{2}-{\frac {1}{n}}+4n{\hat {p}}(1-{\hat {p}})+(4{\hat {p}}-2)}}-1}{2(n+z^{2})} \\
	&=\frac {2z{\sqrt {z^{2}-{\frac {1}{n}}+n}}}{2(n+z^{2})}
\end{align*}
From here, we continue by comparing the two:
\begin{align*}
	&&\varepsilon^{\hat{p}=1/2}_{wcc}&\geq\varepsilon^{\hat{p}=1/2}_{wcc}\\
	\iff&&\frac {2z{\sqrt {z^{2}-{\frac {1}{n}}+n}}}{2(n+z^{2})} & \geq \frac {z^{2}+z{\sqrt {z^{2}-{\frac {1}{n}}+2}}+1}{2(n+z^{2})} \\
	\iff&&2{\sqrt {z^{2}-{\frac {1}{n}}+n}}& \geq z+{\sqrt {z^{2}-{\frac {1}{n}}+2}}+\frac{1}{z}
\end{align*}
Notice that we only obtain any meaningful bound other than $[0,1]$ if $n\geq 2$, therefore we can assume that ${\sqrt {z^{2}-{\frac {1}{n}}+n}} \geq {\sqrt {z^{2}-{\frac {1}{n}}+2}}$.
Thus, for the above inequalities to hold it is sufficient to show that
\begin{align*}
	&&{\sqrt {z^{2}-{\frac {1}{n}}+n}}& \geq z+\frac{1}{z} \\
	\iff&&{ {z^{2}-{\frac {1}{n}}+n}}& \geq z^2+2+\frac{1}{z^2} \\
	\iff&&{ {n-{\frac {1}{n}}}}& \geq 2+\frac{1}{z^2}
\end{align*}
which holds by Assumption 1.
\end{proof}

Additionally, while \Cref{sec:3-sample-number} empirically established that Hoeffding's inequality requires more samples than the Clopper-Pearson interval for a desired precision $\varepsilon$, for the Wilson with CC we can actually show that this holds for all confidence levels $\delta$ where $\epsilon\rightarrow0$.
Note that similarly to \Cref{fig:sample-complexity}, we also suspect that the ratio between required samples for Wilson score with CC and Hoeffding's inequality only increases for larger $\varepsilon$, leading us to conjecture that the Wilson score with CC is actually more sample efficient than Hoeffding's inequality for all $\varepsilon$ and $\delta$.
\noindent
\begin{restatable}{proposition}{samplinglimit}
	\label{lemma:sampling-limit}
	Denote $n_h(\delta,\varepsilon)$ and $n_w(\delta,\varepsilon)$ the solution to the Probability Sample Complexity Problem for confidence budget $\delta$ and precision $\varepsilon$ using Hoeffding's inequality and Wilson score with CC, respectively.
	Then, under \Cref{assumption:wilson}, $r_\delta=\lim_{\varepsilon\rightarrow0}\frac{n_h(\delta,\varepsilon)}{n_w(\delta,\varepsilon)} >1$ for all $0<\delta<1$.
	Further, for all confidence budgets $\theta < \delta$, we have $r_\theta < r_\delta$.
\end{restatable}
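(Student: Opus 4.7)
The plan is to compute exact asymptotic expressions for $n_h(\delta,\varepsilon)$ and $n_w(\delta,\varepsilon)$, take their ratio, and then analyse the resulting function of $\delta$ using the classical Mills ratio bounds on the standard normal.

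\textbf{Step 1 (asymptotics).} Solving $2c_{ho} = \varepsilon$ at the worst case $\hat p = 1/2$ gives exactly $n_h(\delta,\varepsilon) = 2\ln(2/\delta)/\varepsilon^2$. For Wilson with CC, \Cref{lemma:sample-complexity-wilson} identifies $\hat p = 1/2$ as the worst case (this is where \Cref{assumption:wilson} enters); a direct evaluation of the Wilson formula there gives a width of the form $(z\sqrt{z^2 + n - 1/n} + 1)/(n + z^2)$, where $z = z_\delta = \Phi^{-1}(1 - \delta/2)$. Setting this equal to $\varepsilon$ and keeping only the leading terms as $n \to \infty$ yields $n_w(\delta,\varepsilon) = (z_\delta^2/\varepsilon^2)(1 + o(1))$. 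Consequently $r_\delta = \lim_{\varepsilon \to 0} n_h/n_w = 2\ln(2/\delta)/z_\delta^2$.

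\textbf{Step 2 (proving $r_\delta > 1$).} Put $h(\delta) := 2\ln(2/\delta) - z_\delta^2$, so $r_\delta = 1 + h(\delta)/z_\delta^2$. Using $dz_\delta/d\delta = -1/(2\phi(z_\delta))$ we compute $h'(\delta) = -2/\delta + z_\delta/\phi(z_\delta)$. The Mills ratio upper bound $1 - \Phi(z) < \phi(z)/z$ (valid for all $z > 0$) rewrites as $z/\phi(z) < 2/\delta$, so $h'(\delta) < 0$ on $(0,1)$. Hence $h$ is strictly decreasing on $(0,1]$ with $h(1) = 2\ln 2 > 0$, giving $h(\delta) > 2\ln 2$ and therefore $r_\delta > 1$ on $(0,1)$.

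\textbf{Step 3 (monotonicity).} Applying the quotient rule to $r_\delta = 2\ln(2/\delta)/z_\delta^2$ and clearing the positive factor $2/z_\delta^3$, the inequality $r'_\delta > 0$ reduces to $\delta \ln(2/\delta) > z_\delta\,\phi(z_\delta)$. I will establish this by chaining two bounds. First, Step 2 upgrades to $h(\delta) \geq 2\ln 2 > 1$, which rearranges to $\delta \ln(2/\delta) > \delta(z_\delta^2 + 1)/2$. Second, the Mills ratio lower bound $z\phi(z)/(z^2+1) < 1 - \Phi(z) = \delta/2$ gives $z_\delta\phi(z_\delta) < \delta(z_\delta^2 + 1)/2$. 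Concatenating, $z_\delta\phi(z_\delta) < \delta(z_\delta^2+1)/2 < \delta\ln(2/\delta)$, so $r_\delta$ is strictly increasing, proving $r_\theta < r_\delta$ whenever $\theta < \delta$.

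The main obstacle is Step 3: a naive asymptotic analysis shows that both $\delta\ln(2/\delta)$ and $z_\delta\phi(z_\delta)$ share the same leading order $\delta z_\delta^2/2$ as $\delta \to 0$, so neither Mills bound alone—nor the mere $h > 0$ from Step 2—is enough. The key trick is to strengthen $h > 0$ to the uniform quantitative bound $h \geq 2\ln 2 > 1$, which supplies the additive slack of $+1$ needed to bridge the upper and lower Mills bounds through the common expression $\delta(z_\delta^2+1)/2$.
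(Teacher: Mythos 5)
Your proposal is correct, and after the shared first step it takes a genuinely different analytic route from the paper. Both arguments reduce, via \Cref{lemma:sample-complexity-wilson} (this is indeed where \Cref{assumption:wilson} enters) and the worst case $\hat p = \tfrac12$, to the same limit $r_\delta = 2\ln(2/\delta)/z_\delta^2$; the paper solves the Wilson width exactly for $n_{wcc}$ before passing to the limit, whereas you keep only leading terms, which is slightly less explicit but adequate. The divergence is in how positivity and monotonicity are proved. The paper rewrites $r_\delta$ through $\mathrm{erf}^{-1}$, shows $\tfrac{d r_\delta}{d\delta}>0$ by a calculus argument that needs a numerical case split at $\delta\approx 0.073$ together with a second-derivative analysis on the remaining interval, then computes $\lim_{\delta\to 0} r_\delta = 1$ from the asymptotics of $\mathrm{erfc}^{-1}$ and deduces $r_\delta>1$ from monotonicity plus this boundary limit. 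You invert that logical order: you first get $r_\delta>1$ directly by showing $h(\delta)=2\ln(2/\delta)-z_\delta^2$ is strictly decreasing (the upper Mills bound $1-\Phi(z)<\phi(z)/z$ gives $h'(\delta)=-2/\delta+z_\delta/\phi(z_\delta)<0$) with $h(1)=2\ln 2$, and then use the resulting quantitative slack $h\geq 2\ln 2>1$ to chain the lower Mills bound $z\phi(z)<(1+z^2)(1-\Phi(z))$ through the pivot $\delta(z_\delta^2+1)/2$ and obtain $\delta\ln(2/\delta)>z_\delta\phi(z_\delta)$, which is exactly the condition $r'_\delta>0$; I checked the derivative reductions and the Mills inequalities, and they are sound. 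What each approach buys: yours is more elementary and self-contained (only the standard Gaussian tail sandwich, no error-function derivatives, no numerically located thresholds, no asymptotic expansions), and it makes explicit why a merely qualitative $h>0$ would not suffice for monotonicity; the paper's route additionally yields the structurally interesting fact $\lim_{\delta\to 0} r_\delta = 1$, i.e.\ that the advantage of Wilson with CC over Hoeffding vanishes as $\delta\to 0$, which your argument does not provide (and does not need).
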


\begin{proof}
	As established in \Cref{lemma:sample-complexity-wilson}, the sample complexity problem is solved by computing the number of samples for the case where $\hat{p}=\frac{1}{2}$ under \Cref{assumption:wilson}.
	Recall that the size of the confidence interval $\varepsilon$ is related to the worst-case sample requirements $n_{ho}$ computed by the Hoeffding bound and $n_{wcc}$ computed by the Wilson score interval with continuity correction as
	\begin{align*}
		\varepsilon =& 2 \sqrt{\frac{\log\frac{\delta}{2}}{-2n_{ho}}} \\
		\varepsilon =& \frac{z\sqrt{z^2-\frac{1}{n_{wcc}}+n_{wcc}}+1}{n_{wcc}+z^2}
	\end{align*}
	In particular, due to the symmetry of the intervals when $\hat{p}=\frac{1}{2}$, it is guaranteed that it is not necessary to clamp the functions to the interval $[0,1]$ explicitly for $\varepsilon<1$.
	Solving both for for n yields
	\begin{align*}
		n_{ho} =& \frac{\log 4 - 2\log \delta}{\varepsilon^2} \\
		n_{wcc} =& \frac{z\sqrt{(\varepsilon^2-1)(z^2\varepsilon^2-z^2-4\varepsilon)}+z^2+2\varepsilon-z^2\varepsilon^2}{2\varepsilon^2}
	\end{align*}
	From here we can directly compute the ratio $r_\delta$ by plugging in the equations and taking the limit as $\varepsilon\rightarrow 0$
	\begin{align*}
		r_\delta =&\lim_{\varepsilon\rightarrow 0} \frac{n_{ho}}{n_{wcc}} \\
		=&\lim_{\varepsilon\rightarrow 0} \frac{2(\log 4 - 2\log \delta)}{z\sqrt{(\varepsilon^2-1)(z^2\varepsilon^2-z^2-4\varepsilon)}+z^2+2\varepsilon-z^2\varepsilon^2} \\
		=& \frac{\log 4 - 2 \log \delta}{z^2}
	\end{align*}
	As $z$ is the $1-\frac{\delta}{2}$ quantile of the standard normal distribution which can be expressed in terms of the inverse error function $\text{erf}^{-1}$ as $z=\sqrt{2} \text{erf}^{-1}(2(1-\frac{\delta}{2})-1)$ (see, e.g., \cite{Jiang-MLF-2021}) we have
	\begin{equation*}
		r_\delta = \frac{\log 2 - \log \delta}{(\text{erf}^{-1}(1-\delta))^2}
	\end{equation*}
	We now claim $r_\delta > 1$ and that $r_\delta$ and monotonously increasing with $\delta$.
	To show both of these claims, we resort to an analytic argument, computing the derivative of $r_\delta$. Not that the derivative of the inverse error function is $\frac{d}{dx}\text{erf}^{-1}(x)=-\frac{e^{(\text{erf}^{-1})^2}}{\sqrt{\pi}}$ (see, e.g., \cite{bergeron_labelle_leroux_1997}).
	Using this fact we obtain the derivative of $r_\delta$ by standard calculus:
	\begin{equation*}
		\frac{d r_\delta}{d \delta} = -\frac{\text{erf}^{-1}(1-\delta)+\sqrt{\pi} \delta e^{(\text{erf}^{-1}(1-\delta))^2} \log \frac{\delta}{2}}{\delta \text{erf}^{-1}(1-\delta)^3}
	\end{equation*}
	We now claim this is positive for $0<\delta< 1$.
	Since $0<\delta< 1$ implies ${\delta \text{erf}^{-1}(1-\delta)^3}>0$, it is sufficient to prove
	\begin{equation*}
		-\text{erf}^{-1}(1-\delta)-\sqrt{\pi} \delta e^{(\text{erf}^{-1}(1-\delta))^2} \log \frac{\delta}{2} > 0
	\end{equation*}
	Note that all terms except $\log\frac{\delta}{2}$ are positive.
	We reformulate the statement such that only positive terms occur.
	\begin{equation*}
		\sqrt{\pi} \delta e^{(\text{erf}^{-1}(1-\delta))^2} \log \frac{2}{\delta} > \text{erf}^{-1}(1-\delta)
	\end{equation*}
	As $\frac{1}{\sqrt{2e}} e^{x^2}>x$ for all $x$, we can already conclude the statement holds if $\sqrt{\pi} \delta log \frac{2}{\delta} \geq \frac{1}{\sqrt{2e}}$ which can be solved to be the case  when $0.0731...\leq\delta\leq 1$ on the interval $[0,1]$.

	To show that $\frac{d r_\delta}{d \delta}$ is always positive, we now focus on the interval $(0,0.0732]$.
	We aim to show that the second derivative $\frac{d^2 r_\delta}{d^2 \delta}$ is negative on the interval $(0,0.0732]$.
	This is sufficient to prove the claim as we have already shown that $\frac{d r_\delta}{d \delta}>0$ for $\delta=0.0732$.

	\begin{align*}
		\frac{d^2 r_\delta}{d^2 \delta} = & -\frac{1}{2\delta^2 \text{erf}^{-1}(1-\delta)^4}\Bigg( \pi\delta^2e^{2\text{erf}^{-1}(1-\delta)^2}(3-2\text{erf}^{-1}(1-\delta)^2)\log\frac{\delta}{2}\\
		& +4\sqrt{\pi}\delta e^{\text{erf}^{-1}(1-\delta)^2}\text{erf}^{-1}(1-\delta)+2\text{erf}^{-1}(1-\delta)^2 \Bigg)
	\end{align*}
	Analysing the signs of the terms, we see that $-\frac{1}{\delta^2\text{erf}^{-1}(1-\delta)^2}$ is negative, which means we have to show the term in the bracket is positive.
	We can indeed show that all summands are positive: For $4\sqrt{\pi}\delta e^{\text{erf}^{-1}(1-\delta)^2}\text{erf}^{-1}(1-\delta)$ and $2\text{erf}^{-1}(1-\delta)^2$ this is easy since all factors are positive.
	For the factors of the remaining summand, $\pi\delta^2e^{2\text{erf}^{-1}(1-\delta)^2}(3-2\text{erf}^{-1}(1-\delta)^2)\log\frac{\delta}{2}$, we can see that $\pi\delta^2e^{2\text{erf}^{-1}(1-\delta)^2}$ is positive, whereas $\log\frac{\delta}{2}$ is negative.
	It remains to show that $(3-2\text{erf}^{-1}(1-\delta)^2$ is negative.
	As we only have to prove this for $(0,0.0732]$ and the term is positively correlated with $\delta$, we obtain that $(3-2\text{erf}^{-1}(1-\delta)^2$ is maximal on the interval $(0,0.0732]$ if $\delta=0.0732$ which yields $(3-2\text{erf}^{-1}(1-0.0732)^2\approx -0.21 < 0$.

	This finishes the proof of the claim that $\frac{d r_\delta}{d \delta} > 0$.
	From this it immediately follows that $r_\delta$ is positively correlated with $\delta$.
	It remains to show that $r_\delta > 1$.
	As we know $\frac{d r_\delta}{d \delta} > 0$, showing $r_0\geq 1$ is sufficient to show that $r_\delta> 1$ for all $\delta > 0$.
	Unfortunately, we cannot directly finish the proof by computing $r_0$ since it is not defined.
	However, we can find its limit:
	\begin{equation*}
		\lim_{\delta\rightarrow 0} r_\delta = \lim_{\delta\rightarrow 0} \frac{\log 2-\log \delta}{\text{erf}^{-1}(1-\delta)} = \lim_{\delta\rightarrow 0} \frac{\log 2-\log \delta}{\text{erfc}^{-1}(\delta)} = 1
	\end{equation*}
	Here, $\text{erfc}^{-1}$ denotes the inverse complementary error function, defined as $\text{erfc}^{-1}(x)=\text{erf}^{-1}(1-x)$ \cite{Strecock68}.
	The limit follows from the asymptotic expansion of $\text{erfc}^{-1}(x)$ for $x\rightarrow 0$ which is of order $-\log {x}$ \cite{Strecock68}.
	This shows $r_\delta\geq 1$ for all $0 < \delta < 1$, finishing the proof.\qed
\end{proof}

\section{Dependence of Sample Complexity Ratio on $\delta$}
\label{sec:app-comparison}
\label{sec:app-sample-delta}

\noindent

\begin{figure}[t]
	\centering
		\includegraphics[width=\textwidth,keepaspectratio]{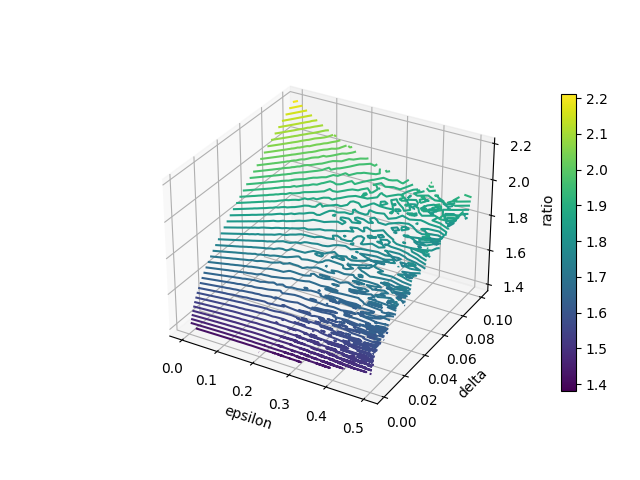}
		\caption{Ratio of required samples for Hoeffding bound and Clopper-Pearson interval for varying precision requirement $\varepsilon$ and confidence budgets $\delta$ in the worst-case ($\hat{p} = \frac{1}{2}$).}
	\label{fig:sample-complexity-3d}
\end{figure}

\noindent
In \Cref{sec:3-sample-number} we discuss the ratio between sample complexities for the Hoeffding bound and Clopper-Pearson interval, respectively.
For an empirical approach, \Cref{fig:sample-complexity} plots these ratios for a range of different precisions $\varepsilon$ and a selected confidence budgets $\delta=0.1$.
A question that remains is how the ratio behaves for other values of $\delta$.
In \Cref{fig:sample-complexity-3d} we plot the ratios between the sample complexity of the two methods while varying both $\varepsilon$ and $\delta$.

First of all, \Cref{fig:sample-complexity-3d} shows once again that the Hoeffding bound always requires more samples than Clopper-Pearson interval in the worst-case, as the ratio between the sample complexity of the Hoeffding bound and the Clopper-Pearson interval is larger than 1 for all relevant values of $\varepsilon$ and $\delta$.
Moreover, the ratio is not monotonically related to $\varepsilon$, but it shifts in favour of the Clopper-Pearson interval for small $\varepsilon$.
Finally, the positive correlation with $\delta$ means it is even more valuable to increase the confidence budget using the structural improvement methods from \Cref{sec:4-title} since these allow for higher $\delta$ per transition.

\section{Technical Details of the Structural Improvements (\cref{sec:4-title})}\label{app:structural}

\subsection{\independence}\label{app:independence}
\noindent
By their very nature, the transition distributions in Markov systems are independent.
Hence, instead of employing the union bound when estimating the \emph{overall} error probability, when can multiplicatively divide up the confidence, as follows.
Recall that ultimately we are interested in the probability of collectively estimating all probabilities of every distribution we are interested in correctly.
More formally, let $\eventAllCorrect= \text{\enquote{All transitions estimated correctly}}$ and require that $\Probability[\eventAllCorrect] \geq 1 - \delta$.
Observe that this event equals $\Intersection_{t \in \mdptransitions} \eventTransCorrect{t}$, where $\eventTransCorrect{t}$ is the event that transition $t$ is estimated correctly.
As explained in \Cref{sec:2-smc}, the usual trick is to consider the complement of $\eventAllCorrect$ and use the union bound, formally
$\delta \leq 1 - \Probability[\eventAllCorrect] = \Probability[\setcomplement{\eventAllCorrect}] = \Probability[\Union_{t \in \mdptransitions} \setcomplement{\eventTransCorrect{t}}] \leq \sum_{t \in \mdptransitions} \Probability[\setcomplement{\eventTransCorrect{t}}]$.

Instead of splitting $\eventAllCorrect$ individually by transitions, we propose to first split by state-action pairs, i.e.\ $\Probability[\eventAllCorrect] = \Probability[\Intersection_{(s, a) \in \States \times \Actions} \eventStateActionCorrect{s}{a}]$, where $\eventStateActionCorrect{s}{a}$ is the event that we correctly estimate the overall distribution $\mdptransitions(s, a)$.
Intuitively, we simply say that the probability of estimating everything correctly equals the probability of jointly for each distribution estimating every transition of that distribution correctly.
Now, note that while the events $\eventTransCorrect{t}$ are not independent, $\eventStateActionCorrect{s}{a}$ however are.
Thus, we obtain that $\Probability[\eventAllCorrect] = \prod_{(s, a) \in \States \times \Actions} \Probability[\eventStateActionCorrect{s}{a}]$.

To generalise, note that it is irrelevant that we talk about state-action pairs and their related distributions.
Rather, suppose that we are interested in estimating each distribution in a given set of independent distributions $\mathcal{D}$, e.g.\ $\{\mdptransitions(s, a) \mid (s, a) \in \States \times \Actions\}$, and define $\eventAllCorrect_{\mathcal{D}}$ and $\eventAllCorrect_{d}$ as the event that we estimated all distributions and distribution $d \in \mathcal{D}$ correctly, respectively.
Then,
\begin{lemma}
	We have $\Probability[\eventAllCorrect_\mathcal{D}] = \prod_{d \in \mathcal{D}} \Probability[\eventAllCorrect_d]$.
\end{lemma}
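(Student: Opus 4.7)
The plan is to reduce the claim to the standard fact that the joint probability of mutually independent events equals the product of their individual probabilities. To do so, I would first make precise what the events $\eventAllCorrect_d$ depend on. For each distribution $d \in \mathcal{D}$, the estimation procedure (e.g.\ Hoeffding's inequality or the Clopper--Pearson interval from \Cref{sec:3-coin}, or any procedure used to produce the confidence region for $d$) takes as input only the samples drawn from $d$ itself; hence $\eventAllCorrect_d$ is a measurable function of the (random) multiset of samples obtained from $d$.

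Next, I would argue that the sample sequences obtained from different distributions in $\mathcal{D}$ are mutually independent. For the case $\mathcal{D} = \{\mdptransitions(s,a) \mid (s,a) \in S \times A\}$ this is a property of the MDP's semantics: conditional on a state-action pair $(s,a)$ being visited, the sampled successor is drawn according to $\mdptransitions(s,a)$ independently of all prior and future randomness used for other state-action pairs. Formally, the canonical product construction of the probability space over infinite paths (see \cite[Chp.~10.1]{DBLP:books/daglib/0020348}) realises each transition by an independent draw from the corresponding distribution. Thus the random objects \enquote{samples drawn from $d$} are mutually independent across $d \in \mathcal{D}$.

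Combining these two observations, the events $\{\eventAllCorrect_d\}_{d \in \mathcal{D}}$ form a family of mutually independent events, and by the defining property of mutual independence
\[
\Probability[\eventAllCorrect_\mathcal{D}] = \Probability\bigl[\Intersection_{d \in \mathcal{D}} \eventAllCorrect_d\bigr] = \prod_{d \in \mathcal{D}} \Probability[\eventAllCorrect_d],
\]
which is the claim.

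\paragraph{Main obstacle.}
The substantive step is not the algebra but the rigorous justification of independence of the per-distribution sample sequences. Two subtleties must be addressed. First, in a typical SMC setting the number of samples drawn from each $d$ is itself a random variable (depending on the exploration strategy), which a priori could couple the estimation events. This is handled by conditioning on the sample counts and observing that, given these counts, the sample outcomes are still independent across distributions; then integrating the product identity over the joint distribution of counts yields the unconditional identity. Second, the argument presupposes that the confidence-interval method treats each distribution in isolation — which indeed holds for every method we consider in \Cref{sec:3-coin}. Once these points are made explicit, the product formula follows directly.
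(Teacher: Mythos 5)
Your proof is correct and takes essentially the same route as the paper: the paper's own proof of this lemma is a one-line appeal to the preceding observation that the per-distribution correctness events are independent (each being a function only of the samples drawn from its own distribution), from which the product formula is immediate. You simply spell out the measurability and independence justification --- and the caveat about random per-distribution sample counts under trajectory sampling --- in more detail than the paper does.
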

\begin{proof}
	By the above argument.\qed
\end{proof}
From this, we also obtain our original statement.
\lemmaindependence*
While this restriction is strictly better, we note that it has diminishing effects for large number of distributions to be estimated or strong confidence requirements, i.e.\ $\delta$ close to $0$.
In particular, we have that the relative difference of confidence requirements imposed on each distribution, i.e.\ $(1 - \delta)^{1/n} / (1 - \delta / n)$, approaches $1$ for $\delta \to 0$ or $n \to \infty$.
Dually, this also means that the effect is more prominent if we can weaken the confidence requirements for particular distributions or shrink $\mathcal{D}$ altogether, using e.g.\ our other presented methods.

\subsection{\fragments}\label{app:fragments}

\noindent
Here, we discuss the \fragments optimization to identify parts state space for which the \enquote{internal} behaviour is not (too) interesting, and thus can be abstracted using a quotient construction.
We suggest to read the illustrative \Cref{ex:4-fragments} before continuing.
Observe that in the example, if we had a second transition entering this area (for example at state $s_2$), we would need to estimate four probabilities instead of two.
This is because for every entering transition (and every internal strategy), we have a different distribution over the exits.

Generally, for a set of states $R \subseteq \States$, we can estimate either the internal behaviour, i.e.\ each transition probability individually, or for each pair of entry and internal strategy estimate the distribution over all exits, which we define as follows.
A state $s \in R$ is called \emph{entry of $R$} if there exists a transition $(s', a, s) \in \mdptransitions$ with $s' \notin R$ or if $s$ is the initial state.
Similarly, $s \notin R$ is called \emph{exit of $R$} if there exists a transition $(s', a, s) \in \mdptransitions$ with $s' \in R$.
Together, let $\entries$ and $\exits$
refer to the set of all entries and exits
of $R$, respectively.
We assume for any strategy the probability to eventually leave $R$ is 1, i.e.\ it is not an end component (which we can detect and treat differently, namely collapse into a single state using \equivalencestructures).

Now, observe that fixing an action in each state induces a distribution over exits for each entry.
In other words, we conceptually replace the set of actions in each entry state by each possible assignment of actions to all states in the fragment, i.e.\ $\StrategiesMD<\MDP>(R) \coloneqq \{\pi \mid \pi : R \to \Actions \land \forall s \in R.~\pi(s) \in \stateactions(s)\}$.
Intuitively, this means upon entering the fragment, we can choose among all internal behaviour and then perform one \enquote{macro step}, skipping over all the internal steps and directly moving to one of the exits.
(Practically, this means sampling a path under the chosen internal behaviour until one of the exits is reached.)
We can then estimate the transition probabilities for each of these \enquote{macro actions}.
This is exactly captured by \cref{def:4-fragments}.

\lemmafragments*

\begin{proof}
	First, observe that the set of paths used to derive the new transition probabilities $\mdptransitions'(s, \pi, e)$ exactly correspond to the \emph{until} query $R \mathop{\textbf{U}} \{e\}$, and thus is measurable, see e.g.\ \cite[Sec. 7]{DBLP:conf/sfm/ForejtKNP11}.

	Next, since we required that $R$ does not contain an EC, we know that $\ProbabilityMDP<\MDP, s><\strategy>[\reach \setcomplement{R}] = 1$ under any strategy $\strategy$.
	In other words, let $E_n^e = \{\infinitepath \mid \forall i < n.~\infinitepath_i \in R \land \infinitepath_n = e\}$ the set of finite paths that exit $R$ in $e$ at step $n$.
	Then $\ProbabilityMDP<\MDP, s><\strategy>[\Union_{n = 1}^\infty \Union_{e \in \States \setminus R} E_n^e] = 1$, observing that $E_n^e$ are pairwise disjoint.
	(Note that we cannot require that $E_n^e$ exactly partitions the set of paths, there may be a measure-zero set of paths that forever remain in $R$.)
	Moreover, $\ProbabilityMDP<\MDP, s><\strategy>[R \mathop{\textbf{U}} \{e\}] = \ProbabilityMDP<\MDP, s><\strategy>[\Union_{n=1}^\infty E_n^e]$.

	Now, we argue that if for two strategies $\strategy, \strategy'$ we have that $\strategy(s) = \strategy'(s)$ for all $s \in R$, then $\ProbabilityMDP<\MDP, t><\strategy>[E_n^e] = \ProbabilityMDP<\MDP, t><\strategy'>[E_n^e]$ for any $n$ and $e$ (and consequently the overall probability of exiting at $e$ is the same).
	This follows immediately by observing that $E_n^e$ can be written as disjoint union of cylinder sets induced by finite paths of length exactly $n$ which remain in $R$ for $n - 1$ steps and then transition into $e$.
	Since the two strategies yield the same decisions for all states in $R$, the probability measure of these cylinder sets equals, proving the result.
	Therefore, while remaining in $R$, any element of $\StrategiesMD<\MDP>(R)$ corresponds to a strategy in $\MDP$.

	Now, to obtain the overall equality, observe that picking the supremum over all strategies in $\MDP'$ chooses some strategy $\strategy'$ as action in $s$.
	This can be immediately replicated by just choosing $\strategy'(s)$ for all $s \in R$ in $\MDP$.
	Dually, any optimal strategy $\strategy$ for $\MDP$ is available as action in $s$.
	Thus, the value is preserved.\qed
\end{proof}
\begin{remark}
	We note that while this optimization may save a lot of confidence budget, there is a subtlety in terms of the number of samples that might make it better to learn the distributions individually even if the above approach would recommend otherwise.
	Assume we have a fragment that is very large, where exits are only reached with very small probability; as an extreme example, consider the Markov chain in~\cite[Fig. 3]{HM18}; there is a single entry and two exits, but paths to the exits are typically of exponential length.
	If we have full sampling access, we can spend linear time in the size of the fragment to estimate all transition probabilities.
	If, in this setting, we \enquote{collapse} the fragment, our mode of getting samples changes, because now we fix an internal behaviour and then sample the fragment until an exit is taken.
	Thus, it can now take exponential time to obtain a single sample for the collapsed fragment, but all the exponentially many internal steps are not used for the estimation.
	We highlight that acyclic fragments (including chains) never have this problem.
\end{remark}
Now, two questions remain:
\begin{enumerate}[(1)]
	\item Given a candidate-fragment $R$, should we apply this technique?
	\item How do we find such candidates $R$?
\end{enumerate}
For the Question (1), we give a simple answer:
Concretely, we can estimate how \enquote{costly} the two alternatives are by computing how many distinct probabilities we would have to learn.
For the classical approach, this effectively boils down to the number of transitions originating from states in $R$.
In contrast, for the above idea, we need to learn (at most) $\cardinality{\entries} \cdot \prod_{s \in R} \cardinality{\stateactions(s)} \cdot \cardinality{\exits}$ probabilities:
For each entry state and each choice of internal actions, we need to learn the distribution over all exits.
(The fact that only internal actions inside $R$ are relevant follow from the proof of \Cref{stm:fragments}.)
We then choose the \enquote{cheaper} option, i.e.\ the one where we need to learn fewer probabilities, which allows us to provide tighter bounds using the same number of samples.
Observe that the expression above only is an upper bound:
Firstly, for a particular choice of actions, some exits may not be reachable at all, which can be detected by graph analysis.
Secondly, it may happen that for a concrete choice of actions, an internal state $s' \in R$ is not reached, which would also imply that for this particular set of actions the action chosen in $s'$ is not relevant and we would not need to learn the distribution over exits for each possible choice in $s'$.
Thirdly, we can also account for \smallsupport{} cases.

For Question (2), we highlight three possible choices.
\begin{description}
	\item[Chains]
	As a simple choice, we propose to choose $R = \{s\}$ for all states with a single predecessor, i.e. there is a unique state-action pair $(t, a)$ with $\mdptransitions(t, a, s) > 0$.
	Then, instead of learning $\mdptransitions(t, a, s)$ and $\mdptransitions(s, a', s')$ for all $a' \in \stateactions(s')$, we intuitively learn $\mdptransitions(t, a, s) \cdot \mdptransitions(s, a', s')$, which is one less transition.
	Our experimental evaluation shows that applying this process performs surprisingly well.

	\item[SCC]
	A strongly connected component (SCC) is defined as follows:
	A non-empty set of states $C \subseteq \States$ in an MDP is \emph{strongly connected} if for every pair $s, s' \in C$ there is a finite path (a finite prefix of infinite paths ending in a state, i.e.\ elements of $(\States\times\Actions)^\ast \times \States$) from $s$ to $s'$.
	Such a set $C$ is a \emph{strongly connected component} (SCC) if it is inclusion maximal, i.e.\ there exists no strongly connected $C'$ with $C \subsetneq C'$.

	We consider each SCC of the system as candidate, as they are easy to find and induce a topological ordering of the state space~\cite{DBLP:journals/siamcomp/Tarjan72}.

	\item[Global Search]
	We could phrase the above problem as an optimization problem and try to find (near-)optimal candidates.
	As our goal is to offer methods with essentially no computational overhead, we refrain from this approach and leave it for future work.
	However, despite the combinatorial hardness, such an optimization may particularly pay off in settings where samples are extremely costly to acquire.
\end{description}
The complexity of identifying fragments naturally depends on the chosen class of candidates.
Identifying chains and SCCs can be done in linear time, while we conjecture global search to be NP-hard.

\subsection{Other Objectives}\label{app:other-objectives}
MDPs can be complemented with various objectives, e.g., reachability, mean payoff, (discounted) total reward, or linear temporal logic.
In the typical SMC algorithm (\cref{sec:2-smc}), the objective is relevant when solving the interval MDP.
As our work focusses on the step of estimating the intervals, the objective hardly affects our methods.
In particular, everything discussed in \cref{sec:3-title} and \cref{sec:4-use-model} (using Clopper-Pearson, \independence and \smallsupport) is completely independent of the objective.
Only those improvements which specifically utilize information about the property depend on the objective, namely \equivalencestructures and \fragments.
We discuss how to modify them for other choices of objective.

To generalise the idea of \equivalencestructures and \fragments, observe that key idea is that internal behaviour of the equivalence class or fragment, respectively, does not affect the value of the objective.
Thus, for prefix-independent infinite-horizon objectives such as mean payoff or $\omega$-regular ones, we can apply \fragments without major modifications, as well as directly use the special case of essential states in \equivalencestructures.
For prefix-dependent objectives such as total reward, we need to ensure that all paths through the fragment affect the objective in the same way, e.g.\ obtain the same reward.
In particular, we may search for such structures among all states with reward zero.

To collapse MECs and their attractors as part of \equivalencestructures, we first need to be able to determine the value of states within the MEC.
By definition of a MEC, we can visit every state of the MEC infinitely often from every other state inside the MEC and its attractor.
For $\omega$-regular objectives, we can determine the long-run behaviour (the MEC is either \enquote{winning} or \enquote{losing}) from the structure alone using standard methods.
For total reward with only non-negative or non-positive rewards, we can also determine the value by considering whether the MEC contains some non-zero reward.
For MECs with positive and negative rewards, as well as for mean payoff objectives, the value of the MEC (if it exists), depends on the exact transition probabilities within the MEC and can therefore not be simplified a-priori.
Nonetheless, for mean-payoff the attractor of a MEC can be collapsed, likewise for total reward the attractor of a MEC among only zero-reward states can also be collapsed.

In theory, the NWR can be extended to other objectives as well.
However, since it is already coNP-complete for reachability \cite{LeRPer18} and even coETR-complete for weighted reachability \cite{EngPerRao23}, it is unlikely there are any relevant objectives for which the complete NWR can be computed efficiently.

\section{Additional Details on the Evaluation}\label{app:exp-details}

\noindent
Here, we show the full versions of \Cref{tbl:results,tbl:ablation} referenced in \Cref{sec:5-title}.
The values of the parameters are in order as in the PRISM benchmark suite.
Throughout the entire section we use a confidence budget of $\delta=0.1$.

\begin{table}[p]
	\centering
	\caption{\textbf{Property-SMC} results including parameters and runtimes} \label{tbl:results-full}
	\begin{adjustbox}{angle=270}
	\begin{tabular}
		{ > {\ttfamily}l > {\ttfamily}l|rrr|rrr|rr}
		& & & \multicolumn{2}{c}
		{Transitions} &  \multicolumn{3}{c}
		{Samples} & \multicolumn{2}{c}
		{Runtime}        \\\cline{6 - 8}
		\normalfont
		Model / Property & \normalfont Parameters & $\varepsilon$ & Base & Ours & Base & Ours & Factor & Base & Ours \\
		\midrule
		consensus.disagree & 2-2 & 0.3 & 484 & 352 & 38\,090.8 & 14\,766.4 & 2.6 & 8.0 & 3.5 \\
		csma.all{\textunderscore}before & 2-2 & 0.1 & 1276 & 14 & 64\,310.4 & 783.0 & 82.1 & 20.0 & 0.9 \\
		firewire{\textunderscore}dl.deadline & 3-200 & 0.05 & 17\,417 & 63 & 1\,685\,965.6 & 46\,286.4 & 36.4 & 936.9 & 27.8 \\
		pacman.crash & 5 & 0.1 & 84 & 2 & 5048.4 & 180.6 & 28.0 & 0.6 & 0.5 \\
		wlan.collisions{\textunderscore}max & 2 & 0.7 & 5444 & 716 & 354\,948.8 & 9118.4 & 38.9 & 95.3 & 3.5 \\
		wlan{\textunderscore}dl.deadline & 0 & 0.9 & 326\,883 & 14\,847 & 21\,083\,953.5 & 201\,535.6 & 104.6 & 8123.2 & 101.5 \\
		zeroconf.correct & 20-2-true & 0.05 & 953 & 265 & 18\,202.3 & 163.9 & 111.1 & 1.8 & 0.5 \\
		zeroconf{\textunderscore}dl.deadline & 1000-1-true-10 & 0.05 & 4777 & 2 & 48\,247.7 & 64.4 & 749.2 & 3.2 & 0.7 \\
	\end{tabular}
	\end{adjustbox}
\end{table}

\begin{table}[p]
	\caption{Ablation study of \textbf{Property-SMC} per problem instance\label{tbl:ablation-full}}
	\centering
	\begin{adjustbox}{angle=270}
		\setlength\tabcolsep{4pt}
\begin{tabular}{>{\ttfamily}l>{\hskip 2pt\ttfamily}lr>{\hskip 5pt}rrrrrrr}
	\normalfont Model / Property                             & \normalfont  Parameters & Transitions & Baseline  & CP    & \smallsupport & \independence  & Structure & NWR   & Chains \\ \midrule
	consensus.disagree & 2-2 & 484 & 2.580 & 1.092  & 1.793   & 1.030 & 1.064  & 1.002 & 1.062  \\
	\midrule
	csma.all{\textunderscore}before & 2-2 & 1276 & 82.133 & 2.008  & 1.166   & 1.079 & 20.297  & 10.397 & 4.667  \\
	\midrule
	firewire{\textunderscore}dl.deadline & 3-200 & 17\,417 & 36.425 & 1.630  & 1.293   & 1.247 & 3.919  & 2.164 & 1  \\
	\midrule
	pacman.crash & 5 & 84 & 27.953 & 2.558  & 1.578   & 1.578 & 11.661  & 11.445 & 7.888  \\
	\midrule
	wlan.collisions{\textunderscore}max & 2 & 5444 & 38.927 & 1.948  & 1.633   & 1.024 & 9.301  & 4.590 & 1.204  \\
	\midrule
	wlan{\textunderscore}dl.deadline & 0 & 326\,883 & 104.617 & 1.934  & 1.599   & 1.012 & 25.837  & 16.307 & 1.281  \\
	\midrule
	zeroconf.correct & 20-2-true & 953 & 111.057 & 10.000  & 1.617   & 1.818 & 21.867  & 7.373 & 1.221  \\
	\midrule
	zeroconf{\textunderscore}dl.deadline & 1000-1-true-10 & 4777 & 749.188 & 14.708  & 1.289   & 1.289 & 252.419  & 143.385 & 1  \\
\end{tabular}
	\end{adjustbox}
\end{table}

\paragraph{Setup.} \label{app:prop-smc-setup}
To determine a suitable precision $\varepsilon$ for each problem instance (i.e.\ such that the baseline SMC algorithm terminates in a reasonable amount of time), we collected sample runs equal to 100 times the number of probabilistic transitions in the model, where each sample run is a path starting in the initial state and reaches either a goal state or a sink state under the scheduler that uniformly at random picks an action in each state.
We then computed the precision obtained by our baseline method (i.e.\ uniform distribution of confidence budget $\delta$ over all transitions and using Hoeffding's inequality).
Finally, we rounded the obtained precision on the value to the nearest $0.05$ and used them as the desired precision (see column $\varepsilon$ in \Cref{tbl:results,tbl:results-full}).

To evaluate the impact of our improvement, we sampled random paths from the MDP, again using a uniform random scheduler, and estimate the value of the property by an interval $[\underline{p},\overline{p}]$, once with and once without our improvements.
For comparable results, we use the same set of samples with the same outcomes for both algorithms.
We continue sampling random paths for both estimates until we have $\overline{p}-\underline{p}\leq \varepsilon$ for the respective estimate.
Under \enquote{Samples} we then report how many sample paths were required to obtain the given precision in each of the two settings.
Further, to express the model size, we give the number of transitions in the model under ``Transitions'' in column ``Base'' in \Cref{tbl:results-full}.
In column ``Ours'' we give the number of transitions in the model after applying the structural improvements of \Cref{sec:4-use-prop}, i.e. \equivalencestructures and \chainfragments.
We emphasize that less transitions does not skew the sample count here since we specifically count sample \emph{runs}.
This is in line with the motivation of SMC since the transformed model is only an equivalent representation which we do not have direct sampling access to.
The smaller size only indirectly improves model-based SMC by allowing for a higher confidence budget per transition, as well as speeding up the solving of the inferred interval MDP.

Finally, in the columns \enquote{runtime} we give the runtime of both the baseline algorithm, and our improved algorithm in seconds.
Note that we only report the time required to compute $\varepsilon$ with the final data set sufficient to achieve a precision of $\varepsilon$, not the time required to find the required data set size.
This can be seen as the algorithm having access to an oracle deciding the number of sample runs required to achieve the desired precision.
The reason for this is that the overall runtime may be improved by optimizing the way in which the sampling and estimation phases of the SMC algorithm (see \Cref{sec:2-smc}) are interleaved, so we report the best-case runtime if enough samples are immediately gathered in the first step.

\paragraph{Ablation study.}
For each improvement, we repeat the process explained above, but without including that improvement.
We again give the improvement factor, i.e.\ ratio of required samples, between using all improvements and using all improvements except one in \Cref{tbl:ablation-full}.
For the structural improvements of \Cref{sec:4-use-prop} we do three separate test: One where we use no structural improvements at all (column ``Structure''), and one where we use \chainfragments, but not \equivalencestructures, and vice verse (columns ``NWR'' and ``Chains'', respectively).
Keep in mind that the heading says which improvement we did \emph{not} use.

\paragraph{Evaluation.}
We first emphasize that \emph{all} factors in our table are $\geq 1$.
In other words, employing our improvements never makes the sample complexity worse.
For \smallsupport, \independence, and \equivalencestructures this is clear since, if applicable, they only allow us to use a higher confidence budget per transition.
With the results of \Cref{sec:3-sample-number}, it is also not surprising that using the Clopper-Pearson interval reduces the number of samples required, although we were not able to formally prove that this holds for all confidence budgets $\delta$ and sample sizes $n$.
Lastly, while \chainfragments always reduces the number of transitions in the model, this does in theory not guarantee that less samples are required to estimate a \emph{property}.
However, we did not encounter any cases where applying \chainfragments had a negative effect in our evaluation.

We now discuss the impact of each improvement.
As \Cref{fig:sample-complexity} (right) suggests, Clopper-Pearson gains more of an advantage over Hoeffding's inequality the more $\hat{p}$ is away from $\frac{1}{2}$.
This is if course likely the case if the real probabilities in the underlying model are away from $\frac{1}{2}$.
Notably, both \texttt{zeroconf} models have transition probabilities of order $10^{-3}$ which is likely why they exhibit the highest improvement factor here.
Further, by \Cref{fig:sample-complexity-3d} we expect a bigger advantage for Clopper-Pearson for models where more confidence budget is available per transition.
This is exactly the case for the models with least transitions, i.e.\ \texttt{zeroconf\_dl} and \texttt{pacman} (see column ``Transitions / Ours'' in \Cref{tbl:results-full}) where indeed we observe comparably higher improvement factors.

Following the discussion after \Cref{prop:independence} in \Cref{app:independence}, we expect a similar statement to hold, in that the impact of \smallsupport diminishes for small confidence budgets, i.e.\ a large number of transitions.
We empirically confirm this by noticing that again both \texttt{zeroconf\_dl} and \texttt{pacman} have comparably high improvement factors.
Additionally, we notice that \texttt{zeroconf} actually has the highest improvement factor despite still having 265 transitions in the transformed model.
We conjecture two reasons for this:
First, some of the transitions may be deterministic and thus not require any confidence budget by \smallsupport.
Second, the difference in confidence budget may have a larger impact for the aforementioned small transition probabilities.

For \smallsupport itself, the impact of course mostly depends on the number of deterministic, or binary transitions and thus depends on the structure of the model.

Similarly, predicting the impact of the structural improvements from \equivalencestructures and \chainfragments, respectively, is non-trivial and requires analysis of the model structure.
In general we notice that \equivalencestructures can have a very large impact.
This does not only depend on the model, but also the property since many transitions can be deemed irrelevant with respect to the property and thus be collapsed.
Lastly, we comment on the synergy between \equivalencestructures and \chainfragments:
Looking at, e.g.\ \texttt{pacman}, we see that, despite including \chainfragments reduces the sample count by a factor of almost 8, not utilising \chainfragments has little impact when also excluding \equivalencestructures.
This can be seen in the very small difference between columns ``Structure'' (where neihter \chainfragments not \equivalencestructures) are utilised) and ``NWR'' (where \chainfragments is utilised but not \equivalencestructures).
We explain this by noticing that both are not mutually exclusive since a chain may, e.g. be part of a MEC and thus equally handles by \equivalencestructures.
In the extreme case, \chainfragments may not seem have any impact, such as in \texttt{zeroconf\-dl} and \texttt{firewire\_dl}, as the improvement factor is $1$.
While there are still chains present, as shown by the difference between columns ``Structure'' and ``NWR'', the NWR actually subsumes all chains in these examples.
On the other hand, there is also the possibility that \equivalencestructures and \chainfragments may synergise: For example a state may have two incoming transitions from two states in the same MEC.
Then it is only a candidate for a chain once the MEC is collapsed via \equivalencestructures.
However, the ablation study alone cannot answer whether this is the case, or whether the overlap of \equivalencestructures and \chainfragments explains superlinear behaviour.
This requires an analsis of the structure of the specific models.
Lastly we want to mention that this synergy may only work in this one direction:
Since \equivalencestructures are defined via the semantics of the MDP and fragments do not alter the semantics (see \Cref{stm:fragments}), collapsing fragments cannot extend the NWR used in \equivalencestructures.

As a final caveat, we comment on the sampling strategy.
Since the problem of finding a good sampling strategy is orthogonal, we used a uniform sampling strategy here to focus on the impact of our improvements on the process of statistical inference.
In the context of a full SMC, we conjecture our methods will potentially have an even larger impact on the sample complexity since suboptimal strategies may be recognized earlier, leading to more samples for the optimal strategy which, again using our improvements, can be utilised to greater effect.

\end{document}